\newcommand{\fix}{\mathrm{Fix}}
\newcommand{\Hc}{{\mathcal H}}
\newcommand{\Qc}{{\mathcal Q}}
\newcommand{\Wc}{{\mathcal W}}
\newcommand{\Rd}{{\mathbb R}}
\newcommand{\hank}{\mathbb{H}}
\newcommand{\rank}{\textsc{rank}}
\newtheorem{theorem}{Theorem}%[section]
\begin{document}

\title{ Deep Convolutional Framelet Denosing for Low-Dose CT via Wavelet Residual Network}

\author{Eunhee~Kang,~%\IEEEmembership{Student Member,~IEEE,} 
        Won Chang,
        Jaejun~Yoo,~%\IEEEmembership{Student Member,~IEEE,}
        and~Jong~Chul~Ye$^{*}$,~\IEEEmembership{Senior Member,~IEEE}% <-this % stops a space
\thanks{EK, JY and JCY are with the Department of Bio and Brain Engineering, Korea Advanced Institute of Science and Technology (KAIST), Daejeon 34141, Republic of Korea (e-mail: \{eunheekang,jaejun2004,jong.ye\}@kaist.ac.kr).
WC is with Department of Radiology, Seoul National University Bundang Hospital,  Gyeonggi-do, 13620, Republic of Korea (email: sword1981@hanmail.net).}% <-this % stops a 
\thanks{Part of this work was presented in 2017 International Conference on Fully Three-Dimensional Image Reconstruction in Radiology and Nuclear Medicine. The trained network and test data are available in  https://github.com/eunh/low\_dose\_CT.}
}		
%\thanks{Manuscript received April 19, 2005; revised August 26, 2015.}}

% use for special paper notices
%\IEEEspecialpapernotice{(Invited Paper)}

% make the title area
\maketitle

% As a general rule, do not put math, special symbols or citations in the abstract or keywords.
\begin{abstract}

Model based iterative reconstruction (MBIR) algorithms for low-dose X-ray CT are computationally expensive.
To address this problem, we recently  proposed a deep convolutional neural network (CNN) for low-dose X-ray CT 
and won the second place in 2016 AAPM Low-Dose CT Grand Challenge.
However, some of the texture were not fully recovered. 
To address this problem, here we propose a novel  framelet-based denoising algorithm using wavelet residual network 
which synergistically combines the expressive power of deep learning and the performance guarantee from the framelet-based denoising algorithms.  
%The framework results in two network implementation: a feed-forward and an RNN archicture.
The new algorithms were inspired by the recent interpretation of the deep convolutional neural network (CNN) as a cascaded convolution framelet signal representation. 
Extensive experimental results confirm that the proposed networks have significantly improved  performance and preserves the detail texture of the original images.

\end{abstract}

% Note that keywords are not normally used for peerreview papers.
\begin{IEEEkeywords}
Deep learning, low-dose CT, framelet denoising, convolutional neural network (CNN), convolution framelets
\end{IEEEkeywords}

\IEEEpeerreviewmaketitle

\section{Introduction}
\label{sec:introduction}

\IEEEPARstart{X}{-ray} computed tomography (CT) is one of the most valuable imaging techniques in clinics.
It is used in various ways, including whole-body  diagnostic CT, C-arm CT for interventional imaging,  dental CT, etc.
However, X-ray CT causes potential cancer risks due to radiation exposure.
To ensure patient safety,   X-ray dose reduction techniques have been extensively studied,
and  the reduction in the number of X-ray photons using  tube current modulation is considered one of the solutions. 
A drawback of this approach is, however, the low signal-to-noise ratio (SNR) of projections, which induces noise in the reconstructed image.
Various model based iterative reconstruction (MBIR) methods \cite{beister2012iterative,ramani2012splitting,sidky2008image} have been investigated to obtain a clear reconstructed image.
However, these approaches are usually computationally expensive due to  the iterative applications of forward and backward projections.

Recently, deep learning approaches have been actively explored for various computer vision applications through the use of  extensive data and powerful graphical processing units (GPUs).
%With the new network units such as rectified linear unit (ReLU), max pooling and batch normalization, 
Deep networks have achieved  great successes in  computer vision applications such as classification \cite{krizhevsky2012imagenet}, denoising \cite{burger2012image,mao2016image,zhang2016beyond}, segmentation \cite{ronneberger2015u}, and super-resolution \cite{kim2016accurate}, etc.

In MR image reconstruction,  Wang et al~\cite{wang2016accelerating}  was the first to apply deep learning to compressed sensing MRI (CS-MRI). 
%They trained the deep neural network from  downsampled reconstruction images to learn a fully sampled reconstruction. 
%Then, they used the deep learning result  either as  an initialization  or as a  regularization term in classical CS approaches. 
%Multilayer perceptron was developed for accelerated parallel MRI \cite{kwon2016learning,kwon2017parallel}.
Deep network architecture using unfolded iterative compressed sensing (CS) algorithm was also proposed 
\cite{hammernik2016learning,sun2016deep}. 
%Domain adaptation from sparse view CT network to projection reconstruction MRI was also proposed \cite{han2017deep}.
In CT restoration problems, our group  introduced the deep learning approach for low-dose X-ray CT \cite{kang2017deep},
whose performance has been rigorously confirmed by winning the second place award in 2016 AAPM Low-Dose CT Grand Challenge.
Since then, several pioneering deep learning approaches for low-dose CT have been proposed by many researchers \cite{chen2017low,chen2017lowBOE,adler2017learned,chen2017learned,wurfl2016deep,yang2017ct,wang2016perspective,yang2017low,wolterink2017generative,wu2017iterative}.
Some algorithms uses generative adversarial network (GAN) loss \cite{yang2017low, wolterink2017generative}.
%Although most of the algorithms employ feed-forward neural networks with various architectures,
 Recent proposal is to incorporate deep neural network within iterative steps \cite{adler2017learned,gupta2017cnn}.
However, existing algorithms consider a deep network as a black-box,
so it is difficult to understand the role of deep networks within iterative steps.

Therefore, one of the main contributions of this paper is to show that  a feed-forward deep learning-based denoising  is indeed the first iteration of 
a special instance of frame-based denoising algorithm using {deep convolutional framelets} \cite{ye2017deep}.
Frame-based denoising approaches using wavelet frames have been an extensive research topics in applied
mathematics community due to its proven convergence \cite{li2014wavelet,dong2017image}.
On the other hand, the theory of deep convolutional framelet  \cite{ye2017deep}
was recently proposed to  explain the mathematical origin of deep neural network 
as a multi-layer realization of   the convolution framelets \cite{yin2017tale}.
Accordingly, the main goal of this paper is to synergistically combine the expressive power of deep neural network
and the performance guarantee from the framelet-based denoising algorithms.  
In particular, we show that the performance of the
deep learning-based denoising algorithm can be improved with iterative steps similar to the classical framelet-based 
denoising approaches \cite{cai2008framelet,cai2009convergence}.
Furthermore, we can provide the theoretical guarantee of the algorithm to converge.

Compared to  the recent proposals of learning-based optimization approaches \cite{adler2017learned,gupta2017cnn},  one of the important advantages of our work
is that our deep network is no more a black box but can be optimized for specific  restoration tasks by choosing
optimal framelet representation.
Thus, we can employ an improved wavelet residual network (WavResNet) structure \cite{kang2017wavelet} in our deep convolutional framelet denoising thanks to its effectiveness in 
recovering the directional components.
We confirm our theoretical reasoning using extensive numerical experiments.

\section{Theory}
\label{sec:theory}

%
%We first review the frame-based denoising algorithm and mathematical
%framework of deep convolutional framelets \cite{ye2017deep}. 
%Then, we will derive
%an recursive form of the deep convolutional framelet denoising
%algorithms which synergistically combines the two concepts.
For simplicity, we derive our theory for 1-D signals, but the extension to 2-D image is straightforward.

\subsection{Frame-based Denoising}

Consider an  {analysis operator} $\Wc$ given by
$\Wc^\top = \begin{bmatrix} w_1 &\cdots & w_{m} \end{bmatrix},$
where the superscript $^{\top}$ denotes the Hermitian transpose and
 $\{w_k\}_{k=1}^m$  is a family of function in a Hilbert space $H$.
Then,  $\{w_k\}_{k=1}^m$    is called a { frame} if it satisfies the following inequality \cite{duffin1952class}:
\begin{eqnarray}\label{eq:framebound}
\alpha \|f\|^2 \leq  \|\Wc f\|^2  \leq \beta\|f\|^2,\quad \forall f \in H ,
\end{eqnarray}
where $\alpha,\beta>0$ are called the frame bounds. 
Then the recovery of the original signal can be done 
from the frame coefficient $c=\Wc f$ using the 
{dual frame} $\tilde \Wc$ satisfying the frame condition:
$\tilde \Wc^\top \Wc = I,$
since
$ f = \tilde \Wc^\top c = \tilde \Wc^\top \Wc f = f.$
This condition is often called {the perfect reconstruction (PR) condition}.
%The canonical form of the dual frame is given by the pseudo-inverse:
%$$\tilde \Wc =  \Wc (\Wc^\top\Wc)^{-1}.$$
We often call  $\tilde \Wc^\top$  as the {synthesis operator}.
The frame is said to be {tight},  if $\alpha=\beta$ in \eqref{eq:framebound}.    This is equivalent to
 $\tilde \Wc =\Wc$ or
$\Wc^\top\Wc = I$.

Suppose that noisy measurement $g \in \Rd^n$ is given by
$$g=f^* + e$$
where $f^*\in \Rd^n$ is a unknown ground-truth image  and  $e\in \Rd^n$ denotes the noise. 
%Here, we start with the following analysis basis model that was 
Then, the classical tight frame-based denosing approaches
%image processing has been proposed via analysis basis image model
 \cite{li2014wavelet,dong2017image} 
 solve the following alternating minimization problem:
\begin{eqnarray}\label{eq:cost}
\min_{f,\alpha} \frac{\mu}{2} \|g - f\|^2+   \frac{1-\mu}{2}\left\{ \|\Wc f - \alpha \|^2 + \lambda \|\alpha \|_1\right\}%\notag\\
\end{eqnarray}
where $\lambda,\mu>0$ denote the regularization parameters.
The corresponding proximal  update equation   is then given by  \cite{li2014wavelet,dong2017image}:
\begin{eqnarray}\label{eq:update}
%\alpha_n &=& T_\lambda\left(\Wc f_n\right) \notag\\
f_{n+1} &=& \mu  g +  (1-\mu) \Wc^\top T_\lambda\left(\Wc f_n\right)  \ , %\Wc^\top \Wc f_n \  .
\end{eqnarray}
where  $T_\lambda(\cdot)$ denotes the soft-thresholding operator with the threshold value of $\lambda$, and $f_n$ refers to the $n$-th update.
Thus, the frame-based denoising algorithm in \eqref{eq:update} is designed to remove  insignificant parts of frame coefficients through shrinkage operation, 
by assuming that most of the meaningful signal has large frame coefficients and  noises are distributed
across all frame coefficients.

    One of the most important advantages of this framelet-based denoising
   is its proven convergence  \cite{li2014wavelet,dong2017image}.
    Our goal is thus to exploit the proven convergence  of these approaches for our CNN based low-dose CT denoising.
    Toward this goal, in the next
    section we  show that CNN is closely related to the frame bases.

\subsection{Deep Convolutional Framelets}

Here, the theory of the deep convolutional framelet \cite{ye2017deep} is briefly reviewed to make this work self-contained.
%Specifically, a (wrap-around) Hankel matrix can be easily obtained from (circular) convolution  \cite{ye2016compressive}.
To avoid special treatment of boundary condition, our theory is mainly derived using circular convolution.
Specifically, let $f=[f[1],\cdots, f[n]]^T\in \Rd^n$ be an input signal
and $\overline\psi=[\psi[d],\cdots, \psi[1]]^T\in\Rd^d$ denotes filter represented as the flipped version of vector $\psi$.
Then, the convolution operation  in  CNN %\cite{krizhevsky2012imagenet}
can be represented using Hankel matrix operation \cite{ye2017deep}.
Specifically, a single-input single-output (SISO) convolution with the filter $\overline\psi$   is given by  a matrix vector multiplication:
\begin{eqnarray}\label{eq:SISO}
y = f\circledast \overline\psi &=&\hank_d(f) \psi \ ,
\end{eqnarray}
where % $\overline\psi$ denotes the flipped version of vector $\psi$ with its index-reversed,
%and
 $\hank_d(f)$ is a wrap-around Hankel matrix
 \begin{eqnarray*} %\label{eq:hank}
\hank_d(f) =\left[
        \begin{array}{cccc}
        f[1]  &   f[2] & \cdots   &   f[d]   \\%  &\cdots&   \yb_i(1)\\
       f[2]  &   f[3] & \cdots &     f[d+1] \\
%         \vdots    & \vdots     &  \ddots    & \vdots    \\
%        f[n-d+1]  &   f[n-d+2] & \cdots &   f[n]\\ \hline 
%        f[n-d+2]  &   f[n-d+3] & \cdots &   f[1] \\
           \vdots    & \vdots     &  \ddots    & \vdots    \\
              f[n]  &   f[1] & \cdots &   f[d-1] \\
        \end{array}
    \right] \ . %\quad \quad\quad \in \Cd^{n\times d}  \  .
    \end{eqnarray*}
%Similarly, a single-input multi-ouput (SIMO) convolution using $d$-length filters $\overline\psi_1,\cdots, \overline\psi_q \in \Rd^d$ can be represented by
%\begin{eqnarray}\label{eq:simo}
%Y = f \circledast \overline\Psi = 
%\hank_d(f) \Psi
%\end{eqnarray}
%where 
%%\begin{eqnarray*}%\label{eq:YH}
%$Y:=\begin{bmatrix}y_1 & \cdots & y_q \end{bmatrix}$
%%\end{eqnarray*}
%and 
%\begin{eqnarray}\label{eq:SIMOPsi}
%\overline\Psi:=  \begin{bmatrix} \overline\psi_1  & \cdots & \overline\psi_q \end{bmatrix}  \in \Rd^{d\times q} \ ,
%\end{eqnarray}
%and $q$ denotes the number of output channels.
Similarly, multi-input multi-output (MIMO) convolution with the matrix input $F:= [f_1\cdots f_p]$ and the multi-channel filter
matrix $\overline\Psi$
\begin{eqnarray}\label{eq:MIMOPsi}
\overline\Psi &:=&\begin{bmatrix}   \overline\psi_1^1 & \cdots &   \overline\psi_q^1  \\ \vdots & \ddots & \vdots \\
\overline\psi_1^p & \cdots &  \overline\psi_q^p 
\end{bmatrix}  \in \Rd^{dp \times q} 
\end{eqnarray}
can be represented as
\begin{eqnarray}\label{eq:multifilter}
Y &=& F \circledast \overline\Psi = \hank_{d|p}\left(F\right) \Psi %\begin{bmatrix} \Psi_1 \\ \vdots \\ \Psi_p \end{bmatrix}
\end{eqnarray}
where $\hank_{d|p}\left(F\right)$ is  an {\em extended Hankel matrix}  by stacking  $p$ Hankel matrices side by side: %  given by
\begin{eqnarray}\label{eq:ehank}
\hank_{d|p}\left(F\right)  := \begin{bmatrix} \hank_d(f_1) & \hank_d(f_2) & \cdots & \hank_d(f_p) \end{bmatrix} %\quad \in f(n,d;p)
\end{eqnarray}
In \eqref{eq:MIMOPsi}, 
$\overline\psi_i^j \in \Rd^d, i=1,\cdots, q;j=1,\cdots,p$ refer to the  $j$-th input channel  filters to generate the $i$-th output channel.
%having length $d$. % (see Fig.~\ref{?}),
%Then, the corresponding matrix representation is  given by 
%
%where $F:= [f_1\cdots f_p]$ and and 
Note that the convolutional representation using  an extended Hankel matrix  in \eqref{eq:multifilter} is equivalent to the multi-channel
filtering operations commonly used in CNN \cite{ye2017deep}.

%
%\begin{figure}[!tb]
%\centering
%\includegraphics[width=3in]{./multilayer.png}
%\caption{(a) Single-layer encoder-decoder architecture,  and (b) multi-layer encoder-decoder architecture.}
%\label{fig:ed}
%\end{figure}

%
%These Hankel structured matrices are the key  to   reveal the link between deep neural network and
%% that the deep neural network is closely related to 
% the convolution framelet by Yin et al \cite{yin2017tale}.
%Specifically, let 
 Let $\Phi=[\phi_1,\cdots,\phi_n]$ and $\tilde \Phi =[\tilde \phi_1,\cdots, \tilde\phi_n] \in \Rd^{n\times n}$  (resp. $ \Psi=[\psi_1,\cdots, \psi_q]$ and $\tilde \Psi =[\tilde \psi_1,\cdots,\tilde\psi_q] \in \Rd^{d\times q}$) are { frames and its duals} satisfying the frame condition:
\begin{eqnarray}
 \tilde \Phi \Phi^\top = I_{n\times n} &,&  \Psi \tilde \Psi^{\top} = I_{d\times d}. \label{eq:id}
 \end{eqnarray}
Accordingly, we can obtain the following matrix identity:
 \begin{eqnarray}
 \hank_d(f) =  \tilde \Phi \Phi^\top \hank_d(f) \Psi \tilde \Psi^{\top} = \tilde \Phi C \tilde \Psi^{\top}
 \end{eqnarray}
 where  $C:=\Phi^\top \hank_d(f) \Psi $ denotes the framelet coefficient.
This results in the following
%the main takeaway message of \cite{ye2017deep} is that
% $f$ can be represented using a 
 encoder-decoder layer structure \cite{ye2017deep}:
 \begin{eqnarray}
 f &=& \left(\tilde\Phi C\right) \circledast \nu(\tilde \Psi), \label{eq:dec}  \\
 C &:=& \Phi^\top \left( f \circledast \overline \Psi\right) \label{eq:enc} 
 \end{eqnarray}
 where $\overline\Psi$ is from  \eqref{eq:MIMOPsi} by setting $q=1$, 
and
\begin{eqnarray}%\label{eq:tau}
%\overline\Psi := \begin{bmatrix} \overline \psi_1 & \cdots & \overline\psi_r \end{bmatrix}
%&, &
%\overline\Psi := \begin{bmatrix} \overline \psi_1 & \cdots & \overline\psi_r \end{bmatrix} \in \Rd^{d\times q} &,&
\nu(\tilde\Psi) := \frac{1}{d} \begin{bmatrix} \tilde \psi_1 \\ \vdots \\
\tilde \psi_r
\end{bmatrix} \ . % \in \Rd^{dr}. % \in \Rd^{dq \times 1} . 
\end{eqnarray}
Similarly, 
for a given  matrix input $Z \in \Rd^{n\times p}$, 
we can also derive the paired encoder-decoder structure \cite{ye2017deep}:
%$$$
\begin{eqnarray}
C &=&  \Phi^\top \left( Z \circledast  \overline\Psi\right) \label{eq:encZ} \\
Z 
&=& \left(\Phi C\right) \circledast \nu(\tilde \Psi) \label{eq:decZ}
\end{eqnarray}
where  the  encoder filter is given by \eqref{eq:MIMOPsi}  and the decoder  filters is defined by
\begin{eqnarray}\label{eq:tauZ}
%\overline\Psi &:=&\begin{bmatrix}   \overline\psi_1^1 & \cdots &   \overline\psi_q^1  \\ \vdots & \ddots & \vdots \\
%\overline\psi_1^p & \cdots &  \overline\psi_q^p 
%\end{bmatrix}  \in \Rd^{dp \times q} \\
\nu(\tilde\Psi) &:=&  \frac{1}{d} \begin{bmatrix}  \tilde \psi_1^1 & \cdots &  \tilde \psi_1^p  \\ \vdots & \ddots & \vdots \\
\tilde \psi_q^1 & \cdots &  \tilde \psi_q^p 
\end{bmatrix}  \in \Rd^{dq \times p}
\end{eqnarray}
such that they satisfy the frame condition
\begin{eqnarray}\label{eq:id2}
\Psi \tilde\Psi^\top = I_{dp\times dp} \quad .
\end{eqnarray}
%and the convolution in \eqref{eq:encZ}  and  \eqref{eq:decZ} correspond to the
%multi-input multi-output (MIMO) convolution \cite{ye2017deep}.
The simple convolutional framelet expansion using  \eqref{eq:enc}, \eqref{eq:dec},  \eqref{eq:encZ} and \eqref{eq:decZ} is so powerful
that the deep CNN architecture emerges from them.
{\color{black}
Specifically, by inserting the pair
 \eqref{eq:encZ} and \eqref{eq:decZ} between the pair \eqref{eq:enc} and \eqref{eq:dec}, 
 we can derive a deep network structure. % which is a cascade structure of convolutional framelets.
% Moreover, with paired filters with opposite phase, the ReLU disappears, so
%a convolutional neural network (CNN)  with ReLU can be mapped to the deep convolutional framelets.  
For more detail, see  \cite{ye2017deep}.
}

\subsection{Deep Convolutional Framelet Denoising}

Now,  note that the computation of our deep convolutional framelet coefficients can be represented by 
analysis operator:
$$\Wc f: = C = \Phi^\top (f\circledast \overline \Psi)$$
whereas the synthesis operator is given by the decoder part of convolution:
$$\tilde \Wc^\top C := (\Phi C) \circledast \nu (\tilde \Psi). $$
If  the frame conditions \eqref{eq:id} or \eqref{eq:id2} are met at each layer,
we can therefore use the classical update algorithm in \eqref{eq:update} for denosing. 
Then, what is the shrinkage operator that corresponds to $T_\lambda(\cdot)$ in \eqref{eq:update}?  
One of the unique aspects of deep convolutional framelets is that
by changing   the number of filter channels, we can 
achieve the shrinkage behaviour  \cite{ye2017deep}.
%and the network training learns the optimal local basis  for given nonlinearity and channels.
More specifically,  
low-rank shrinkage behaviour emerges  when the number of output filter channels are not sufficient.
Therefore, the explicit application of the shrinkage operator is no more necessary.

 To understand this claim, consider the following regression problem under low-rank Hankel structured matrix constraint:
% Now,   a generic form of the low-rank Hankel structured constrained regression problem can be formulated as
\begin{eqnarray}
\min_{f\in \Rd^{n}}  & \|f^* -f\|^2 \notag\\
\mbox{subject to }  &\quad \rank \hank_d(f) \leq r < d .  \label{eq:fcost}
\end{eqnarray}
where $f^*\in \Rd^n$ denotes the ground-truth signal, $r$ is the upper
bound of the rank, and  $\hank_d(f)\in \Rd^{n\times d}$. %we are interested in finding rank-$r$ approximation.
The low-rank  Hankel structured matrix constraint in \eqref{eq:fcost}  is known  for its excellent performance   in image denoising \cite{jin2015sparse+}, artifact removal \cite{jin2016mri} 
and deconvolution \cite{min2015fast}.

A classical approach to address \eqref{eq:fcost} is using the explicit singular value shrinkage operation to 
impose the low-rankness \cite{candes2009exact,cai2010singular}.
However, using deep convolutional framelets, we do not need such explicit shrinkage operation.
More specifically,
% for any feasible solution $f\in \Rd^n$ for \eqref{eq:fcost}, let $\hank_d(f)\in \Rd^{n\times d}$ denote a
%associated  Hankel structured matrix whose rank is $r<d$.
let $V\in \Rd^{d\times r}$ denote the basis for $R\left((\hank_d(f))^\top\right)$
where $R(\cdot)$ denote the range space. % In other word, $V$ spans the row space of  $\hank_d(f)$.
% has  the singular value decomposition
%%\begin{align}\label{eq:svd0}
%$\hank_{d}(f) = U \Sigma V^{\top}$
%%\end{align}
%where $U =[u_1 \cdots u_r] \in \Rd^{n\times r}$ and $V=[v_1\cdots v_r]\in \Rd^{d\times r}$ denote the left and the right singular vector bases matrices, respectively;
%$\Sigma=(\sigma_{ij})\in\mathbb{R}^{r\times r}$ is the diagonal matrix with singular values.  
%%such that $\Sigma_{ii}$, for all $0<i<r$, is the singular value of $\mathbb{H}_d(f)$ and $\Sigma_{ij}=0$ otherwise. 
%%Here, $r$ denotes the rank of $\mathbb{H}_d(f)$.
%% $\Sigma \in \Rd^{r\times r}$ is the diagonal matrix whose diagonal components contain the singular values, and $r$ denotes its rank.
Then,  there always exist two matrices pairs $\Phi, \tilde \Phi \in \Rd^{n\times n}$ and $\Psi$, $\tilde \Psi\in \Rd^{d\times r}$ satisfying the conditions
\begin{eqnarray}
%\sum_{k=1}^p T^{(k)\top}T^{(k)} &=& I_{n\times n},~ \label{eq:T} \\
\tilde \Phi \Phi^\top = I_{n\times n}, \qquad \Psi \tilde \Psi^{\top} = P_{R(V)}  \label{eq:noframe}
\end{eqnarray}
where $R(V)$ denote the range space of $V$ and $P_{R(V)}$ represents a projection onto $R(V)$.
Note that the bases matrix $\tilde \Psi\in \Rd^{d\times r}$ in \eqref{eq:noframe} does not satisfy the frame condition \eqref{eq:id} due to the insufficient number channels,
i.e. $r< d$. However, we still have the following matrix equality that is essential for deep convolutional framelet expansion \cite{ye2017deep}:
$$\hank_d(f) = \tilde \Phi \Phi^\top \hank_d(f)  \Psi \tilde \Psi^{\top}.$$
Accordingly, we can define a space  $\Hc_r$ by collecting signals 
that can be decomposed to the
single layer deep convolutional framelet expansion:
\begin{eqnarray*}
\Hc_r =\left\{ f\in \Rd^n~|~ f = \left(\tilde\Phi C\right) \circledast \nu(\tilde \Psi), 
C = \Phi^\top \left( f \circledast \overline \Psi  \right) \right\} %\label{eq:finsuf}
\end{eqnarray*}
%Note that the basis ($\Phi, \tilde \Phi$) and ($\Psi$, $\tilde \Psi$) are not specified in  \eqref{eq:newcost}.
Then, the regression problem  in \eqref{eq:fcost}   can be equivalently represented by
\begin{eqnarray}\label{eq:newcost}
\min_{ f \in \Hc_r}   \|f^* - f\|^2   \ ,
\end{eqnarray}
which implies that the explicit rank condition is embedded as a single layer convolutional framelets.

However, \eqref{eq:newcost} holds for any signals that can be represented by arbitary $(\Phi,\tilde\Phi)$ and $(\Psi,\tilde\Psi)$ satisfying \eqref{eq:noframe},
and we should find ones that are optimized for given data.
In our deep convolutional framelets,  $\Phi$ and $\tilde \Phi$ correspond to the generalized pooling and unpooling which are chosen based on
the application-specific knowledges \cite{ye2017deep}, so we are interested in only estimating the filters $\Psi$, $\tilde \Psi$.
Then, the main goal of the neural network training  is to learn   ($\Psi$, $\tilde \Psi$) from training data  $\{(f_{(i)}, f_{(i)}^*)\}_{i=1}^N$ assuming that
$\{f_{(i)}^*\}$ are associated with rank-$r$ Hankel matrices.
Thus, \eqref{eq:newcost} can be modified for the training data as follows:
\begin{eqnarray}\label{eq:newcost2}
\min_{\{f_{(i)}\}\in \Hc_r}   \sum_{i=1}^N\|f_{(i)}^* - f_{(i)}\|^2  
\end{eqnarray}
which can be converted to the neural network training problem:
\begin{eqnarray}\label{eq:training}
 \min_{( \Psi, \tilde\Psi)}  \sum_{i=1}^N\left\|f_{(i)}^* - \Qc(f_{(i)};\Psi,\tilde\Psi)\right\|^2
 \end{eqnarray}
where
\begin{eqnarray}
\Qc(f_{(i)};\Psi,\tilde\Psi)&=& \left(\tilde\Phi C[f_{(i)}]\right) \circledast \nu(\tilde \Psi) \label{eq:Qcf} \\
\quad C[f_{(i)}] &=&
\Phi^\top \left( f_{(i)} \circledast \overline \Psi  \right) . \label{eq:Cf}
\end{eqnarray}
The idea can be further extended to the multi-layer deep convolutional framelet expansion with nonlinearity. Then,
\eqref{eq:training} is equivalently to the standard neural network training.
Once the network is fully trained, the inference for a given noisy input $f$ is simply done by
$\Qc(f;\Psi,\tilde\Psi)$, which is equivalent to find a denoised solution. % that has the rank-$r$ Hankel structured matrix.
Therefore, using deep convolutional framelets with insufficient channels, we do not need an explicit shrinkage operation and
the update equation \eqref{eq:update} can be replaced by
\begin{eqnarray}\label{eq:iter}
f_{n+1} = \mu  g + (1-\mu  )  \Qc(f_n;\Psi,\tilde\Psi) \ ,  %\Wc^\top \Wc f_n \  .
\end{eqnarray}
where 
$\Qc(f_n)$ is the deep convolutional framelet output.

However, one of the main differences of \eqref{eq:iter} from \eqref{eq:update} is that our deep convolutional framelet does not
satisfy the tight frame condition that is required to guarantees the convergence of \eqref{eq:update}.
Therefore,  to guarantee the convergence, we need to relax the iteration using Krasnoselskii-Mann (KM) method \cite{bauschke2011convex}
as described in Algorithm~\ref{alg:Pseudocode}.
Then, using the standard tools of proximal optimization \cite{bauschke2011convex}, we can show that
 the sequence generated by Algorithm~\ref{alg:Pseudocode} converges to a fixed point.
\begin{theorem}\label{thm:convergence}
There exists a parameter $\mu \in (0,1)$ such that the deep convolutional framelet denoising algorithm in Algorithm~\ref{alg:Pseudocode} converges to a fixed point.
\end{theorem}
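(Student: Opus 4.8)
The plan is to recognize the update \eqref{eq:iter}, together with the relaxation of Algorithm~\ref{alg:Pseudocode}, as a fixed-point iteration for the operator
\begin{eqnarray*}
T(f) := \mu\, g + (1-\mu)\,\Qc(f;\Psi,\tilde\Psi),
\end{eqnarray*}
whose fixed points are exactly the signals $f$ with $f = \mu g + (1-\mu)\Qc(f)$. The strategy is to show that for a suitable $\mu\in(0,1)$ the operator $T$ becomes a contraction on $\Rd^n$, so that the Banach fixed-point theorem (and a fortiori the Krasnoselskii--Mann convergence theorem \cite{bauschke2011convex} applied to the nonexpansive $T$) yields a unique fixed point together with convergence of the generated sequence.

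First I would establish that the denoising map $\Qc(\cdot;\Psi,\tilde\Psi)$ of \eqref{eq:Qcf}--\eqref{eq:Cf} is globally Lipschitz. Indeed, $\Qc$ is a finite composition of the bounded linear encoder and decoder operations --- circular convolution with the filters $\overline\Psi$ and $\nu(\tilde\Psi)$, together with multiplication by the (un)pooling matrices $\Phi^\top$ and $\tilde\Phi$ --- interleaved with the pointwise nonlinearities of the network (e.g.\ ReLU), each of which is $1$-Lipschitz. Since a composition of Lipschitz maps is Lipschitz, there is a finite constant $L>0$ with $\|\Qc(f)-\Qc(f')\|\le L\,\|f-f'\|$ for all $f,f'\in\Rd^n$. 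The constant $L$ is controlled by the product of the spectral norms of the linear layers; because the deep convolutional framelet need not satisfy the tight-frame condition $\Psi\tilde\Psi^\top=I$, we cannot assume $L\le 1$, but $L$ remains finite for any fixed trained network.

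Given this bound, $T$ satisfies $\|T(f)-T(f')\| = (1-\mu)\,\|\Qc(f)-\Qc(f')\| \le (1-\mu)L\,\|f-f'\|$, since the data term $\mu g$ cancels. I would then choose
\begin{eqnarray*}
\mu \in \Big(\max\{0,\,1-\tfrac{1}{L}\},\,1\Big),
\end{eqnarray*}
a nonempty subinterval of $(0,1)$ because $L<\infty$; for such $\mu$ the Lipschitz constant $(1-\mu)L$ of $T$ is strictly below one. Hence $T$ is a contraction on the complete space $\Rd^n$, and the Banach/KM theorem forces $\{f_n\}$ to converge to the unique fixed point of $T$. When Algorithm~\ref{alg:Pseudocode} uses an explicit relaxation $f_{n+1}=(1-\alpha_n)f_n+\alpha_n T(f_n)$ with $\alpha_n\in(0,1]$, the same conclusion holds: the averaged map $(1-\alpha_n)I+\alpha_n T$ inherits contractivity from $T$, so the KM convergence theorem applies verbatim.

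The main obstacle is the second step: rigorously certifying that $\Qc$ is Lipschitz with a finite, controllable constant $L$. The difficulty is to propagate operator-norm bounds through the encoder--decoder layers precisely when the tight-frame identity fails, so that the frame bounds of $(\Phi,\tilde\Phi)$ and $(\Psi,\tilde\Psi)$ and the $1$-Lipschitz nonlinearities combine into a single finite $L$. Once $L<\infty$ is secured, the choice of $\mu$ and the contraction argument are routine.
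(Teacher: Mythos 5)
Your proposal follows essentially the same route as the paper's proof: define $T(f)=\mu g+(1-\mu)\Qc(f)$, bound the Lipschitz constant of $\Qc$ by the product of the layer norms (the paper does this via the mean value theorem and the Jacobian, citing that it factors into filter norms), choose $\mu$ so that $(1-\mu)$ times that constant is at most one, and invoke the Krasnoselskii--Mann theorem. Your direct Lipschitz-composition argument is, if anything, slightly cleaner than the paper's mean-value-theorem step (since ReLU is not differentiable), but the substance is identical.
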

\begin{proof}
See Appendix B.
\end{proof}

\begin{algorithm}
\caption{Pseudocode implementation.}
\label{alg:Pseudocode}
\begin{algorithmic}[1]
\State Train a deep network $\Qc$ using training data set.
\State  Set $0\leq \mu\leq 1$ and  $0<\lambda_n<1, \forall n$.
\State Set initial guess of $f_0$ and $f_1$.
\For{$n=1,2, \dots,$ until convergence}
   \State  $q_n:= \Qc(f_n) $
    \State $\bar f_{n+1} := \mu  g + (1-\mu) q_n$
    \State $f_{n+1} := f_n + \lambda_n (\bar f_{n+1}- f_n )$
    \EndFor
\end{algorithmic}
\end{algorithm}

Algorithm~\ref{alg:Pseudocode} corresponds to a recursive neural network (RNN) as shown in Fig.~\ref{fig:rnn}, which is related to an iterative
network in
\cite{wu2017iterative}.
%Therefore,  this RNN structure is referred to as the deep convolutional framelet denoising algorithm.  
If we use $\mu=0,\lambda_n=1$, the first iteration of Algorithm 1 corresponds to 
a feed-forward deep convolutional framelet denosing algorithm, which is also important by itself.
%We also use the RNN architecture by allowing iterations.
In Experimental Results, we show the improvement using  RNN.
However, our feed-forward network is much faster with compatible image quality. Thus, we believe that both algorithms are useful in practice.
Both algorithms have the same neural network backbone $\Qc(\cdot)$, which will be described in detail in the following section.

\begin{figure}[!htb]
\centering
\includegraphics[width=5.8cm]{./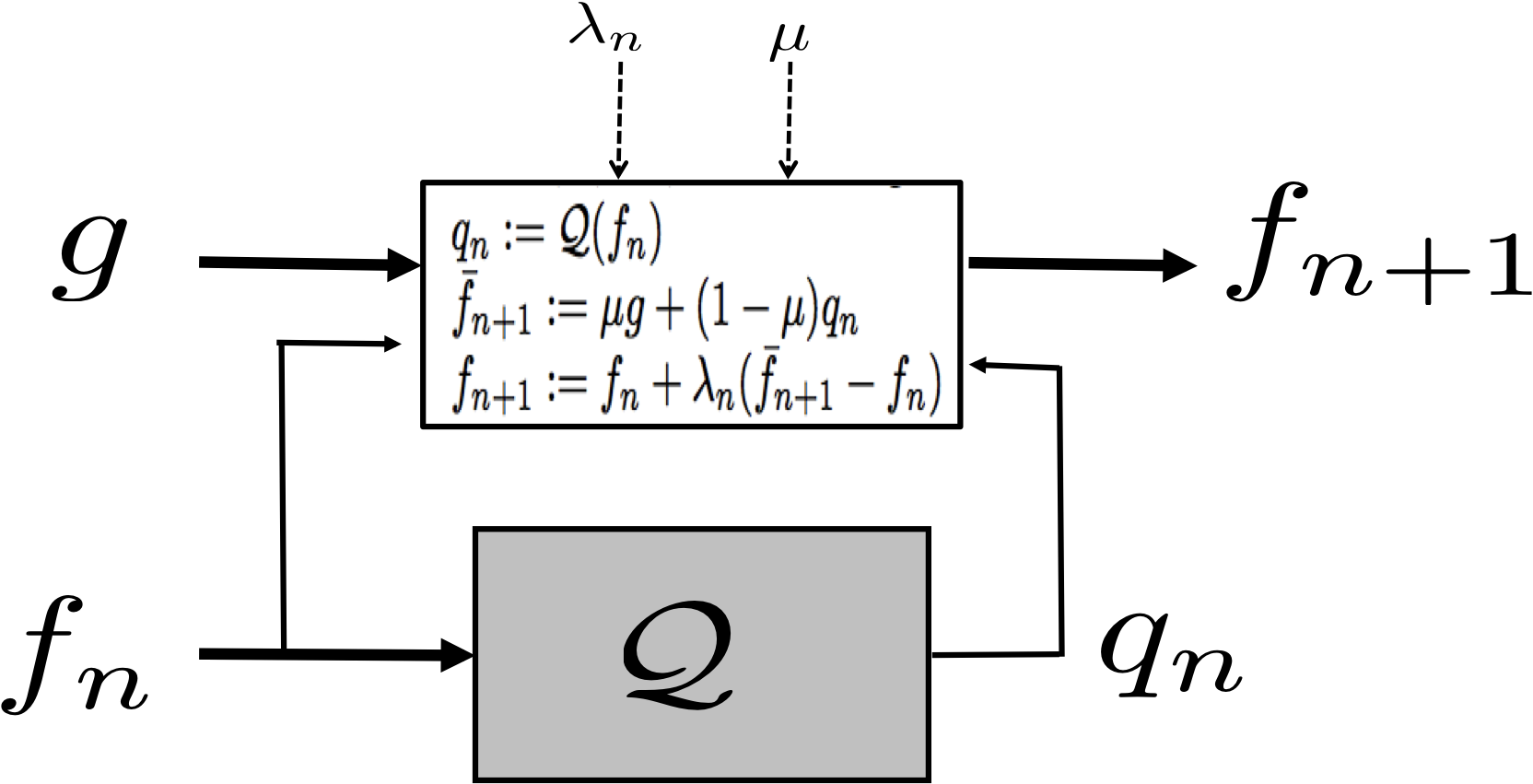}
\caption{Proposed RNN structure for deep convolutional framelet denoising.}
\label{fig:rnn}
\end{figure}

\subsection{Optimizing the Network Architecture}

In order to  have the best denoising performance in frame-based denosing,
the frame bases should have good energy compaction properties.  
For example, due to the vanishing  moments of wavelets, wavelet transforms can annihilate the smoothly varying signals while maintaining the image edges, thus resulting
in good energy compaction.
Thus, wavelet
frames such as contourlets \cite{zhou2005nonsubsampled}  are often used for denoising.
%Similarly,  in our deep convolutional framelet denoising,
%the convolutional framelet should have good energy compaction
%property \cite{yin2017tale}.
Furthermore, low-dose X-ray CT images exhibit streaking noise, so
the  contourlet transform \cite{zhou2005nonsubsampled}  is good for detecting the streaking noise patterns by
representing the directional edge information of X-ray CT images better.
Thus, we are  interested in using WavResNet \cite{kang2017wavelet} that employs the contourlet transform  \cite{zhou2005nonsubsampled}.
The proposed WavResNet architecture is illustrated in Fig.~\ref{fig:network_architecture}.
WavResNet has three unique
components: contourlet transform, concatenation, and skipped connection.
WavResNet is an extension of  our
prior work \cite{kang2017deep} that has similar network architecture except that
residuals at each subband are estimated by the neural network  \cite{kang2017wavelet}.
In this paper,   we provide a new interpretation of  WavResNet using the theory
of deep convolutional framelets \cite{ye2017deep}.

\begin{figure}[!hbt]
\centering
\includegraphics[width=9cm]{./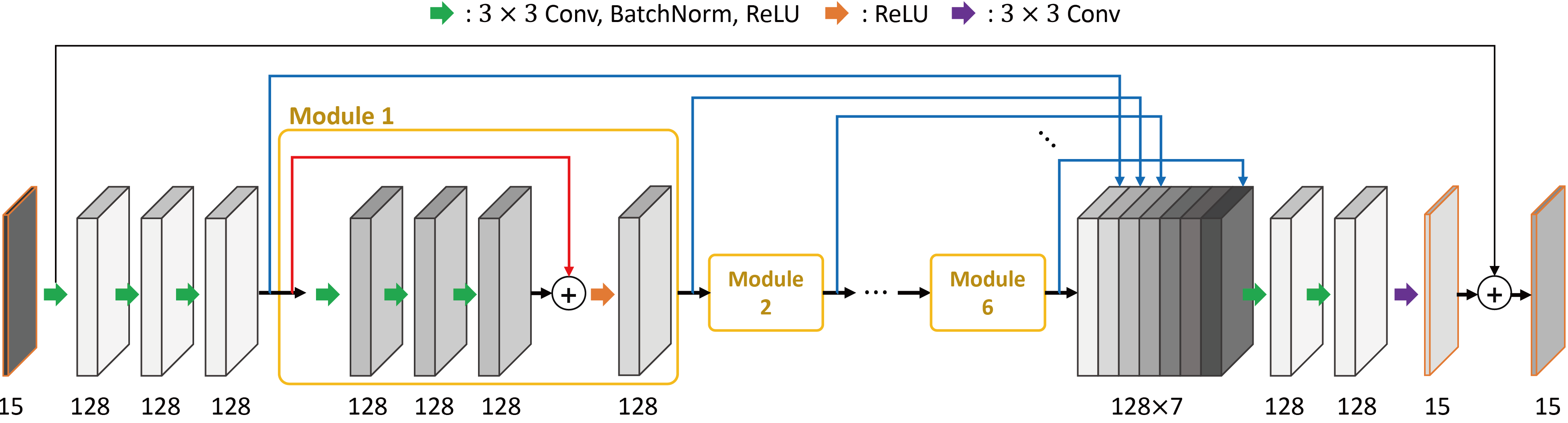}
\vspace*{-0.5cm}
\caption{The proposed WavResNet backbone (i.e. $\Qc(f)$ in Algorithm 1 and Fig.~\ref{fig:rnn}) for low-dose X-ray CT restoration.
% The 15 input channels are generated from contourlet transform without decimation and the network is designed to estimate the contourlet transform coefficients.
 }
\label{fig:network_architecture}
\end{figure}

%In this section, we show that these features do not violate the framelet condition of the network, rather they
%helps to achieve better signal representation.

Specifically, for a given signal $f\in \Rd^n$,  the directional subband transform
  $\{T_k\}_{k=1}^p, T_k \in \Rd^{n\times n}$ in contourlet transform  satisfies the resolution of identity:
\begin{eqnarray}
\sum_{k=1}^p \tilde T^{(k)\top}T^{(k)} &=& I_{n\times n},~ \label{eq:T} 
\end{eqnarray}
which implies that there exists inverse transform $\{\tilde T_k\}_{k=1}^p$ to utilize all subband signals to recompose the original signal.
Thus,  the signal $f$ can be decomposed into directional components:
$$f = \sum_{k=1}^p \tilde T^{(k)\top}T^{(k)}f = \sum_{k=1}^p \tilde T^{(k)\top} f_k$$
where 
$f_k = T^{(k)}f \in \Rd^n$
corresponds to the $k$-th subband signals.
Then, our goal is to obtain the deep convolutional framelet representation of
the input matrix
$$Tf := \begin{bmatrix} T^{(1)}f & \cdots & T^{(p)} f\end{bmatrix}. $$
Because this multi-channel input size is big, we further decompose the signal using patch extraction operator
$$Z_l  = P_l T f = \begin{bmatrix} P_lT^{(1)}f & \cdots & P_lT^{(p)} f\end{bmatrix}. $$
where $\{P_l\}_l$ denotes the (overlapping) patch that has the same spatial location across all subbands.
The WavResNet does not use any pooling, i.e. $\Phi=\tilde\Phi = I_{n\times n}$, since the global correlations have been removed using the contourlet transform.
Thus,  by inserting each $Z_l$ in \eqref{eq:decZ} and \eqref{eq:encZ}, we have
\begin{eqnarray}\label{eq:Zdexp}
Z_l 
&=& C \circledast \nu(\tilde \Psi) \\
C &=&  \left( Z_l \right)\circledast  \overline\Psi  \  .   
 \end{eqnarray}
The successive layers are similarly implemented using the standard multi-channel convolution in CNN.
The resulting patch-by-patch CNN processing are performed on all parts of images, and the final results are obtained by  averaging.

Another important  component of WavResNet is the {\em boosting} using the concatenation layer.  This is closely related to the boosting scheme in classification that combines multiple weak classifiers to obtain 
a stronger classifier \cite{schapire1998boosting}.
Specifically, suppose that perfect recovery (PR) condition satisfies for all cascade of encoder-decoder network.
Then, the recovery condition 
for deep convolutional framelets up to $L$-layer can be written by
\begin{eqnarray*}
Z_l %&=& \hank_{d_{(1)}|p_{(1)}}^{\dag}\left({C}^{(1)}\tilde \Psi^{(1)\top} \right)  \\
&=& {C}^{(1)}\circledast \nu\left(\tilde \Psi^{(1)} \right) \\
%Tf &=&  \hank_{d_{(1)}|p_{(1)}}^{\dag}\left(\hank_{d_{(1)}|p_{(1)}}^{\dag}\left(\mathbbm{C}^{(2)}\tilde \Psi^{(2)} \right)\tilde \Psi^{(1)} \right) \\
%&=& {C}^{(2)}\circledast \nu\left(\tilde \Psi^{(2)} \right) \circledast \nu\left(\tilde \Psi^{(1)} \right) \\
&\vdots & \\
Z_l %&=&  \hank_{d_{(1)}|p_{(1)}}^{\dag}\cdots\left(\hank_{d_{(L)}|p_{(L)}}^{\dag}\left({C}^{(L)}\tilde \Psi^{(L)} \right)\cdots\tilde \Psi^{(1)} \right)  \\
&=& {C}^{(L)}\circledast \nu\left(\tilde \Psi^{(L)} \right)  \cdots\circledast \nu\left(\tilde \Psi^{(1)} \right) 
\end{eqnarray*}
where
\begin{eqnarray}\label{eq:Cenc}
C^{(i)}  &=&\begin{cases}  \left(C^{(i-1)} \circledast \overline\Psi^{(i)}\right),  & 1\leq i \leq L \\
Z_l, & i=0 \end{cases}
\end{eqnarray}
and the superscript $^{(i)}$ denotes the $i$-th layer.
Thus, for a given intermediate encoder output $\{C^{(l)}\}_{l=1}^L$, by denoting $h^{(l)} := \nu\left(\tilde \Psi^{(l)} \right)  \cdots\circledast \nu\left(\tilde \Psi^{(1)} \right)$, 
a {\em boosted} decoder can be constructed by  combining  multiple decoder representation:
\begin{eqnarray}
Z_l &=& \sum_{i=1}^L  w_i \left({C}^{(i)}\circledast  h^{(i)}\right), \quad
\end{eqnarray}
where $\sum_{i=1}^L w_i = 1$.
This procedure can 
be  performed using  a single multi-channel convolution after concatenating encoder outputs, as shown in Fig.~\ref{fig:newnet}(a).
In Experimental Results, we will show that this provides improved denoising performance thanks to the boosting effect.

Finally,  WavResNet has the skipped connection \cite{he2016deep} as shown in Fig.~\ref{fig:newnet}(b).
In order to understand the role of the skipped connection, 
 note that the low-dose input $f_{(i)}$ is contaminated with noise so that  it can be written by
$$f_{(i)} = f_{(i)}^*+h_{(i)}, $$
where $h_{(i)}$ denotes the noise components and $f_{(i)}^*$ refers to the noise-free ground-truth.
Then,  the network training \eqref{eq:training} using the skipped connection can be equivalently represented as the network training to estimate
the artifacts:
\begin{eqnarray}\label{eq:training2}
 \min_{( \Psi, \tilde\Psi)}  \sum_{i=1}^N\left\|h_{(i)}- \tilde\Qc(f_{(i)};\Psi,\tilde\Psi)\right\|^2
 \end{eqnarray}
%In particular, when the network has the skipped connection,  \eqref{eq:Qcf} should be changed to 
%$$\tilde\Qc(f_{(i)};\Psi,\tilde\Psi)= \Qc(f_{(i)};\Psi,\tilde\Psi)+ f_{(i)}$$
where 
\begin{eqnarray}\label{eq:tildeQ}
\tilde\Qc(f_{(i)};\Psi,\tilde\Psi) =\left(\tilde\Phi C[f_{(i)}^*+h_{(i)}]\right) \circledast \nu(\tilde \Psi),
\end{eqnarray}
%and 
$$C[f_{(i)}^*+h_{(i)}]=
\Phi^\top \left( (f_{(i)}^*+h_{(i)})\circledast \overline \Psi  \right).$$
Therefore, if we can find a convolution filter $\overline \Psi $ such that
it approximately annihilates the true signal $f_{(i)}^*$ \cite{vetterli2002sampling}: 
\begin{eqnarray}\label{eq:fanal}
 f_{(i)}^*\circledast \overline \Psi \simeq 0  &\Longrightarrow C[f_{(i)}^*+h_{(i)}] \simeq C[h_{(i)}]
%R(\overline\Psi) = R\left((\hank_d(h_{(i)}))^\top\right)
\end{eqnarray}
%
%$$f_{(i)}^*\circledast \overline \Psi =0,$$
%and spans the row space of $\hank_d(h_{(i)})$, 
then  we  can find the decoder filter $\tilde\Psi$ such that
\begin{eqnarray*}
\left(\tilde\Phi C[h_{(i)}]\right) \circledast \nu(\tilde \Psi) 
&=&\left(\tilde\Phi   \Phi^\top \left( (h_{(i)})\circledast \overline \Psi  \right)\right) \circledast \nu(\tilde \Psi) \\
&=& h_{(i)}\circledast \overline \Psi  \circledast \nu(\tilde \Psi) \\
&\simeq &  h_{(i)} \ .
\end{eqnarray*}
%by choosing $\Psi \tilde\Psi = - I$.
Thus, our deep convolutional framelet with a skipped connection can estimate the artifact $h_{(i)}$  and remove it  from $f_{(i)}$.
%, i.e.
%$\tilde\Qc(f_{(i)};\Psi,\tilde\Psi) = h_{(i)}.$
On the other hand, using the similar argument, we can see that if
\begin{eqnarray}
 h_{(i)}\circledast \overline \Psi \simeq 0   &\Longrightarrow C[f_{(i)}^*+h_{(i)}] \simeq C[f_{(i)}^*]
%R(\overline\Psi) = R\left((\hank_d(f_{(i)}^*))^\top\right)
\end{eqnarray}
then
a deep convolutional framelet {\em without} the skipped connection can directly recover the ground-truth signal $f_{(i)}^*$,
i.e.
$\Qc(f_{(i)};\Psi,\tilde\Psi) \simeq f_{(i)}^*.$
Then, which one is better ?
In our case, the true underlying signal has lower dimensional structure  compared to
  the random  CT noises,
so the annihlating filter relationship in \eqref{eq:fanal} is more easier to achieve \cite{vetterli2002sampling}.
Therefore,  we use the skipped connection as shown in   Fig.~\ref{fig:newnet}(b).

\begin{figure}[!bt] 
\center{\includegraphics[width=8cm]{./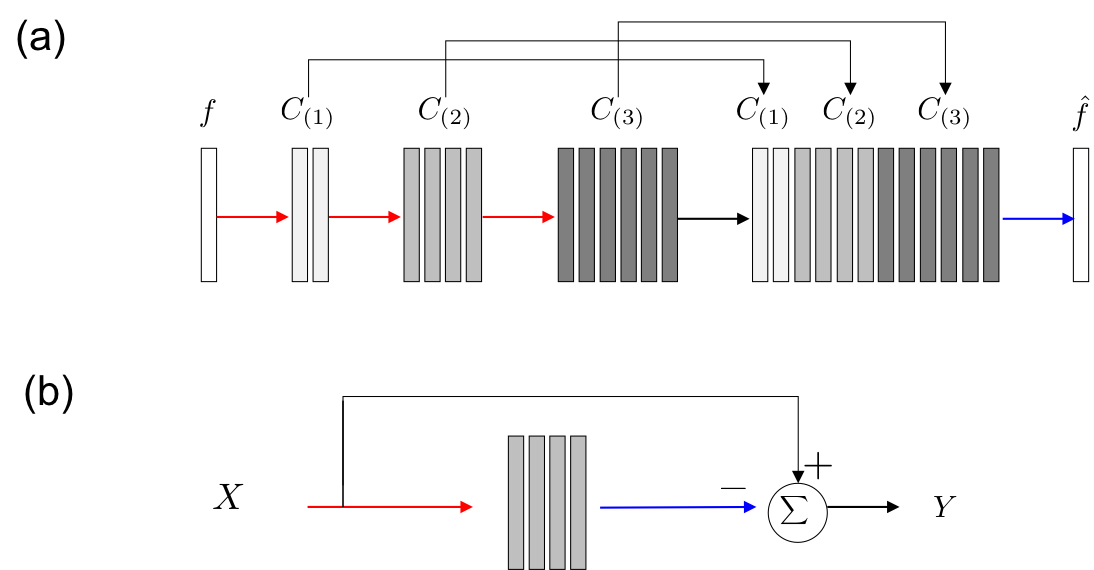}}
\caption{(a) Concatenation layer, and (b) skipped connection.}
\label{fig:newnet}
\end{figure}

By combining the contourlet transform, boosting and skipped connection,  we conjecture that  WavResNet can represent
the signal much more effectively which makes the deep convolutional framelet denoising effective.

\begin{figure}[!h]
\includegraphics[width=3.5in]{./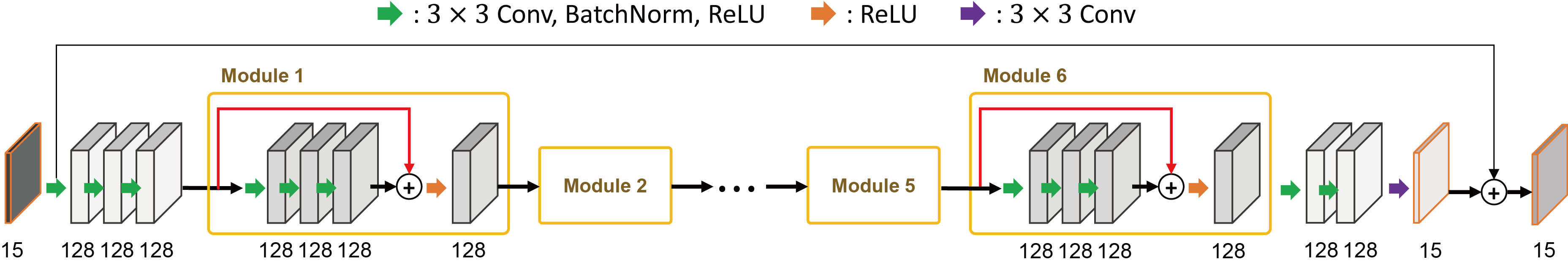}
\vspace*{-0.5cm}
\caption{A symmetric network architecture to investigate the importance of boosting layers in WavResNet.}
\label{fig:reference_network}
\end{figure}

\section{Method}
\label{sec:method}

\subsection{Proposed network architecture}

%The deep convolutional framelet denoising algorithm is illustrated in  Fig.~\ref{fig:network_architecture}.
In  Fig. \ref{fig:network_architecture},
 we first apply non-subsampled contourlet transform to generate 15 channels inputs \cite{zhou2005nonsubsampled}.
%First step is a non-decimated multi-scale decomposition to produce highpass and lowpass subbands, and the second step is a directional decomposition to divide the highpass subband into directional components.
%The \textit{k}-th level filter banks are generated by iterating these two steps.
There is no down-sampling or up-sampling in the filter banks; thus, it is a shift invariant.
We used 4 level decomposition and 8, 4, 2, 1 directional separations for each level, which produces the total 15 bands.
Thus, we have 15 subband channels. %  $T^{(k)}, k=1,\cdots, 15$.
%Once the residual wavelet coefficients are estimated,
%denoised contourlet coefficients are obtained by subtracting them from the input noisy contourlet coefficients.

The first convolution layer uses 128 set of $3\times3\times15$ convolution kernels to produce 128 channel feature maps.
The shift invariant contourlet transform allows the patch processing and we used $55\times55\times15$ patches for the training and inference.
Then, the final contourlet coefficients are obtained by taking patch averaging.
%This corresponds to $p_{(1)}=15, d_{(1)}=9$, so the number of minimum output channels to satisfy PR is $q_{(l)}=15\times 9=135$. 
%Moreover, with the ReLU, this increases up to $135\times 2=270$.
Based on the calculation in \cite{ye2017deep},  a sufficient condition to meet the  PR is that
 the number of output channel should  be 270,  which is bigger than 128 channels. 
Thus, the first layer performs a low-rank approximation of the first layer Hankel matrix.
Then, the following convolution layers use two  $3\times3\times128$ convolution kernels, which is again believed to  perform low
rank approximation of the extended Hankel matrix approximation. Later, we will provide an empirical result showing that the singular
value spectrum of the extended Hankel matrix indeed becomes compressed as we go through the layers.

We have 6 set of main module composed of 
 3 sets of convolution, batch normalization, and ReLU layers, and 1 bypass connection with a convolution and ReLU layer.
%Again, this exploits the advantages of residual learning in deep convolutional framelet \cite{ye2017deep}.
Finally, as shown in Fig. \ref{fig:network_architecture}, %rather than estimating the noise-free images, our network estimates the noises.  This is indeed equivalent to 
our network has  the end-to-end bypass connection \cite{ye2017deep} so that  we can directly estimate
the noise-free contourlet coefficients while exploiting the advantages of skipped connection \cite{he2016deep}.
Another uniqueness of the proposed network is that it has the concatenation layer  as shown in Fig.~\ref{fig:newnet}(a).
Specifically, our network concatenates the outputs of the individual modules, which is followed by the convolution layer with 128 set of $3\times3\times896$ convolution kernels. 
As discussed before, this corresponds to the signal boosting using multiple signal representation.
In optimization aspect, this  also provides various paths for gradient back-propagation.
Finally, the last convolution layer uses 15 sets of $3\times3\times128$ convolution kernels. 
This may correspond to the pair-wise decoder layers with respect to the first two convolutional layers.

\subsection{Network training}

We trained two networks:  a feed-forward network and an RNN. %and the recursive network.
We applied stochastic gradient descent (SGD) optimization method to train the proposed network.
The size of mini-batch was 10.
The convolution kernels were initialized by random Gaussian distribution.
The learning rate was initially set to 0.01 %and decreased continuously down to $10^{-5}$ for the feed-forward network
and decreased continuously down to $10^{-7}$. % for the recursive network.
The gradient clipping was employed in the range $[-10^{-3},10^{-3}]$ to use a high learning rate in the initial steps for  fast convergence.
For data augmentation, the training data were randomly flipped horizontally and vertically.
Our network was implemented using MatConvNet \cite{vedaldi2015matconvnet} in MATLAB 2015a environment (Mathworks, Natick).

%
%We applied stochastic gradient descent (SGD) optimization method to train the proposed network.
%The shift invariant contourlet transform allows the patch processing and we used $55\times55\times15$ patches for the training.
%The size of mini-batch was 10.
%The convolution kernels were initialized by random Gaussian distribution.
%The learning rate was initially set to $10^{-2}$ and decreased continuously down to $10^{-7}$.
%The gradient clipping was employed in the range $[-10^{-3},10^{-3}]$ to use a high learning rate in the initial steps for the fast convergence.
%For data augmentation, the training data were randomly flipped horizontally and vertically.
%Our network was implemented using MatConvNet \cite{vedaldi2015matconvnet} in MATLAB 2015a environment (Mathworks, Natick).

{The training processes are composed by three stages.
In stage 1, we trained the network using original database $DB_0$ which consists of a pair of quarter-dose and routine-dose CT images.
After the network converged initially, stage 2 is proceeded sequentially.
In stage 2, we add databases $DB_i$ gradually which consists of quarter-dose input,  inference results from $\Qc_k(f_i)$, and routine-dose CT images.
Here, $\Qc_k$ dentoes the trained network until $k$-th epochs and $f_i$ is the $i$-th inference results.
Finally, in stage 3, we added a database whose both input and target images are routine-dose CT images. %, because the network should satisfy the perfect reconstruction condition.
The theoretical background of such training is from the framelet nature of deep convolutional neural network \cite{ye2017deep}. Specifically, the neural network training is to learn the framelet bases from the training data that has the best representation of the signals. Thus, the learned bases should be robust enough to have near optimal representation for the given input data set. In our KM iteration, each iterative steps provided the improved images, which needs to be fed into the same neural network. Thus, the framelet bases should be trained to be optimal not only for the strongly aliased input but also for the near artifact-free images. 
The resulting network was used for both our RNN and feed-forward networks.
}

\begin{figure*}[!t]
\centering
\includegraphics[width=6.5in]{./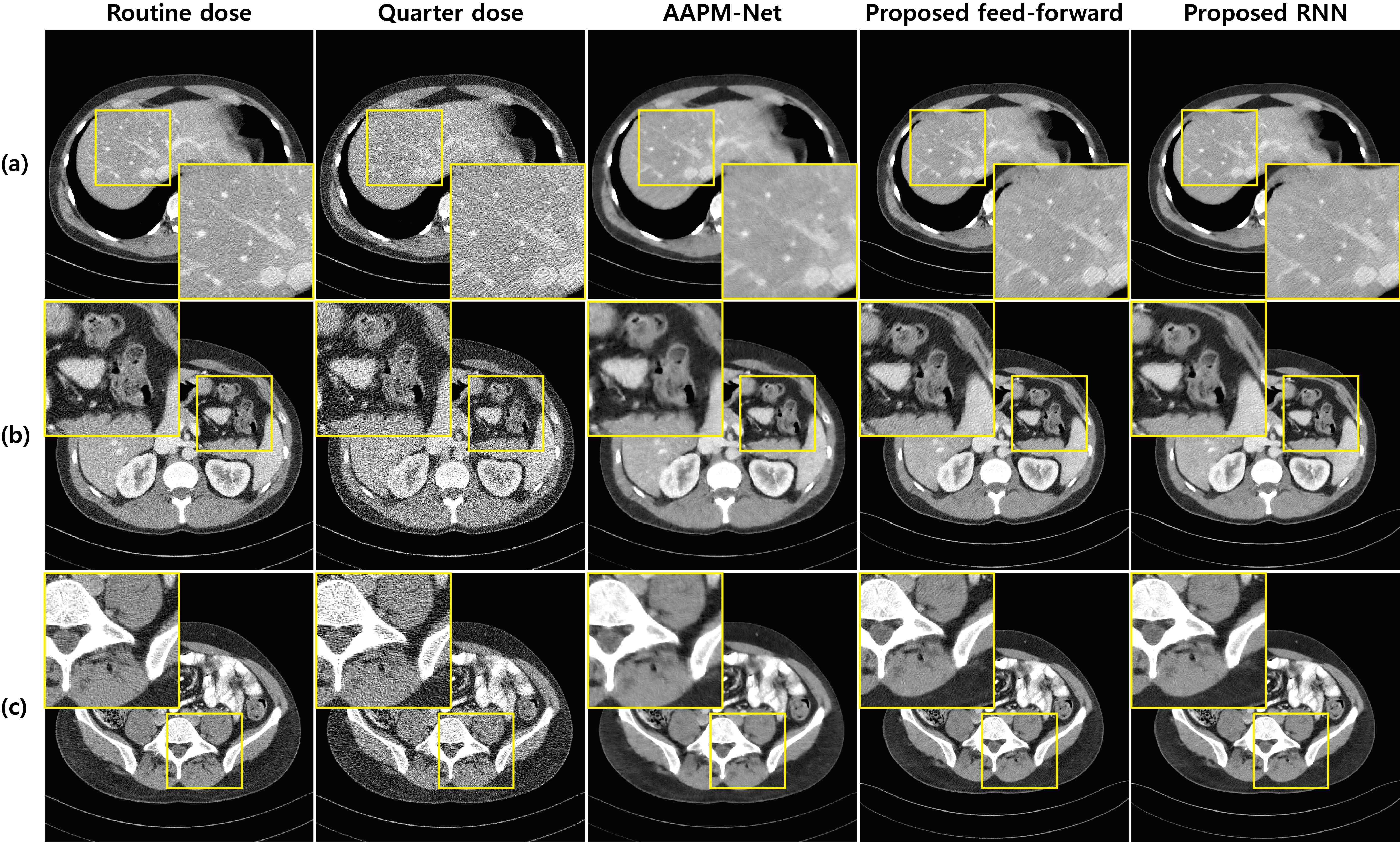}
\vspace*{-0.2cm}
\caption{Transverse view restoration results with routine-dose and quarter-dose images.
AAPM-Net is the algorithm which we applied to the ``2016 Low-Dose CT Grand Challenge".
Intensity range is (-160,240) [HU] (Hounsfield Unit).
(a) Example of liver, (b) example of intestine and (c) example of pelvic bone.}
\label{fig:result_cnn}
\end{figure*}

\subsection{Training dataset}

We used projection data obtained from ``2016 Low-Dose CT Grand Challenge".
The raw projection data were measured by a 2D cylindrical detector that moves along a helical trajectory using a z-flying focal spot \cite{flohr2005image}.
These projections were approximated into fanbeam projection data by a single slice rebinning technique \cite{noo1999single}.
We reconstructed X-ray CT images using conventional filtered backprojection algorithm.
The number of pixels in X-ray CT images is $512\times512$ and the slice thickness is 3mm.
We have 9 patient data sets of routine dose and quarter dose data for the training.
Eight patient data were used for the training and validation, and the remaining one patient data was used for testing.
Among the eight patient data, we used 3236 slices for the training and the remaining 350 slices for the validation.

For phantom studies, we used CT image data from Seoul National University Bundang Hospital, Korea.
 The number of pixels is $512\times 512$ and the slice thickness is 4mm. 
 The network was trained using 50 patient data sets of routine-dose,  13\% dose, 25\% dose, and 50\% dose images.
This study received technical support from Siemens Healthcare (Erlangen, Germany) to simulate CT images at various low dose levels.
Low dose images were simulated by inserting Poisson noise into the projection data of routine dose and reconstructing images from those projection data using the filtered back projection method.
 The number of individual dose images is 7617 slices and we arbitrarily selected 540 slices and 60 slices for training and validation, respectively, for every 50 epoch.
To quantify the resolution and contrast at test phase, 
we used the Catphan 500 (The Phantom Laboratory, Salem, NY, USA) composed of various modules. The spatial resolution was evaluated using a high-resolution module (CTP528), and contrast-to-noise ratio (CNR) were evaluated using a low contrast module (CTP515). The contrast was calculated as the difference of mean CT numbers between a supra-slice target (contrast of 1.0\% and diameter of 15mm) and the adjacent background area. The noise was defined as the standard deviation of the CT numbers of the adjacent background area. Mean CT numbers and standard deviations were calculated  from  circular region-of-interest (ROI) having a diameter of 1 cm in the target and the adjacent background area. ROIs were placed at the exactly same locations on the images produced by different algorithms.

\subsection{Baseline algorithms}

%For a quantitative evaluation, we used the XCAT phantom simulation data and we use the image metrics such as root mean square error (RMSE), peak signal-to-noise ratio (PSNR) and structural similarity (SSIM) index.
%The mean square error (MSE), which is defined as
%\begin{equation}
%	MSE = \dfrac{1}{mn} \sum_{i=0}^{m-1} \sum_{j=0}^{n-1} [ \Yb(i,j) - \Xb(i,j)]^2,
%\end{equation}
%where $\Xb$ is the restoration image and $\Yb$ is a noise-free (ground truth) image 
%and RMSE is $\sqrt{MSE}$.
%The PSNR is defined in terms of MSE,
%\begin{eqnarray}
%	PSNR %&=& 10 \cdot \log_{10} \left(\dfrac{MAX_{\Yb}^2}{MSE}\right) \\
%		 &=& 20 \cdot \log_{10} \left(\dfrac{MAX_{\Yb}}{\sqrt{MSE}}\right), 
%\end{eqnarray}
%where $MAX_{\Yb}$ is the maximum value of image $\Yb$.
%SSIM is measured image degradation as perceived change in structural information \cite{wang2004image} and it is defined as
%\begin{equation}
%	SSIM = \dfrac{(2\mu_{\Xb}\mu_{\Yb}+c_1)(2\sigma_{\Xb\Yb}+c_2)}{(\mu_{\Xb}^2+\mu_{\Yb}^2+c_1)(\sigma_{\Xb}^2+\sigma_{\Yb}^2+c_2)},
%\end{equation}
%where $\mu_{\Xb}$ is a average of $\Xb$, $\sigma_{\Xb}^2$ is a variance of $\Xb$ and $\sigma_{\Xb\Yb}$ is a covariance of $\Xb$ and $\Yb$. 
%There are two variables to stabilize the division such as $c_1=(k_1L)^2$ and $c_2=(k_2L)^2$.
%$L$ is a dynamic range of the pixel intensities. $k_1$ and $k_2$ are constants by default $k_1=0.01$ and $k_2=0.03$. 
%The XCAT phantom images were used to the ground truth and the denoised images from simulated quarter dose were used to calculate the image metrics.

We compared the proposed method with the other denoising algorithms such as BM3D \cite{dabov2007image}, MBIR regularized by total variation (TV), ALOHA \cite{jin2016random}, the image domain deep learning approach (RED-CNN) \cite{chen2017low}, and CNN with GAN loss \cite{yang2017low}.
MBIR regularized by TV was solved using an alternating direction method of multiplier (ADMM) \cite{ramani2012splitting} and Chambolle's proximal TV \cite{chambolle2004algorithm}.
The details of RED-CNN and GAN  were obtained from the original paper  and we have implemented them
accordingly \cite{chen2017low, yang2017low}.

To verify the improvement of the new algorithm, we perform comparative study with our previous deep network in wavelet domain for ``2016 Low-Dose CT Grand Challenge" \cite{kang2017deep}.
We call this network as AAPM-Net.
The main difference between the proposed one and the AAPM-Net comes from the definition of the target images.
In AAPM-Net, the target images was the original wavelet coefficients except the lowest frequency band. 
More specifically, in AAPM-Net, the lowest frequency band target is the residual, whereas the higher frequency band signals are the wavelet coefficients themselves.
Therefore, this is not a ResNet from the perspective of deep convolutional framelets.
On the other hand, in WavResNet,  the residual wavelet coefficients between the routine-dose and low-dose inputs are estimated for every subband.
In the current implementation, the final network output is the artifact-corrected images by subtracting the estimated artifacts using
the end-to-end skipped connection as shown in  Fig.~\ref{fig:network_architecture}.
In order to demonstrate the importance of signal boosting, we also implemented a symmetric network as illustrated in
Fig. \ref{fig:reference_network}.
Except for the concatenation layers, the symmetric network also has identical 6 modules structures with symmetric encoder and encoder structures.
%To have fair comparison, the symmetric network also estimates the residual wavelet coefficients between the routine-dose and low-dose inputs.

\section{Experimental Results}
\label{sec:result}
\subsection{Comparison with AAPM-net}

\begin{figure}[t]
\centering
\includegraphics[width=4cm]{./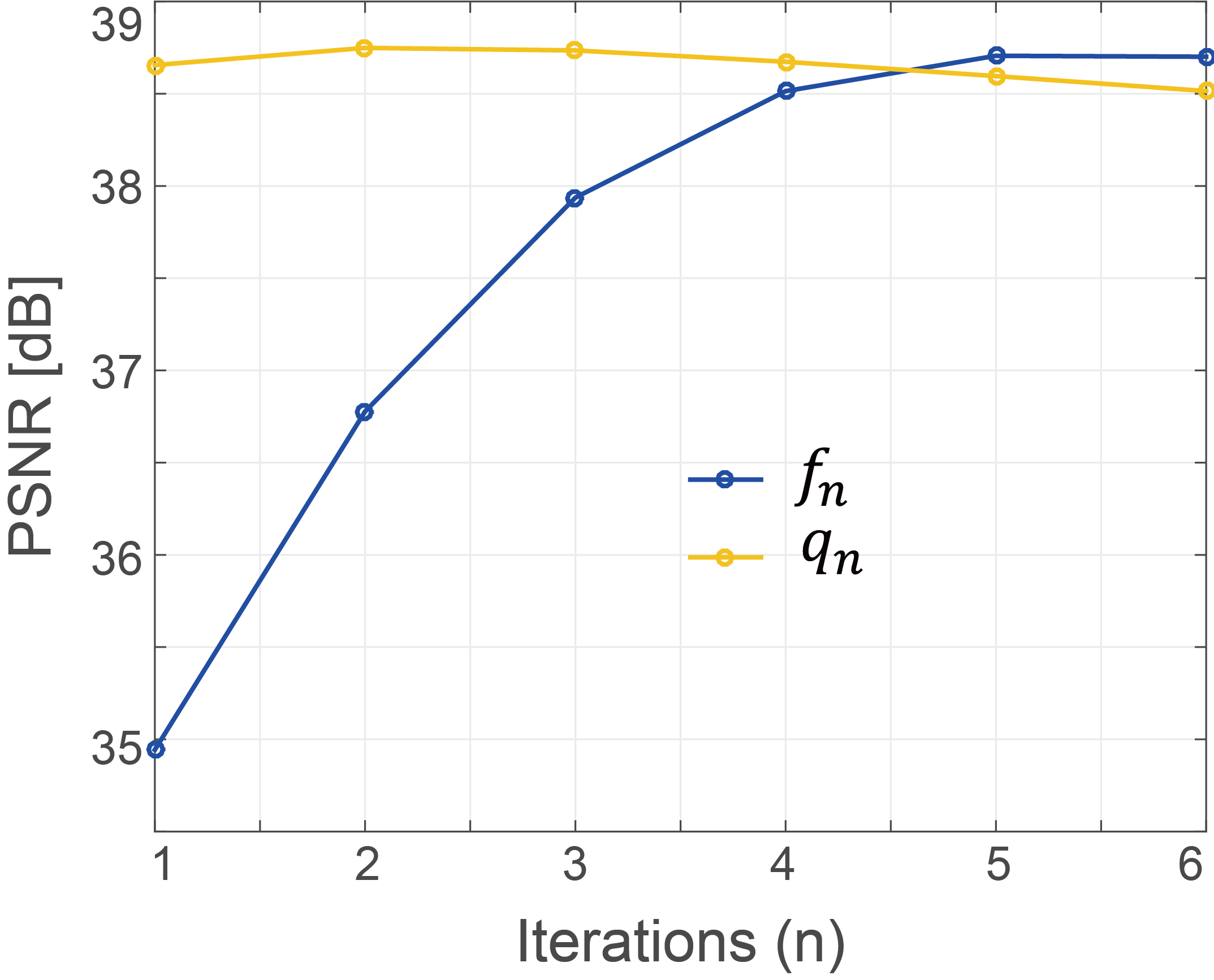}
\vspace*{-0.2cm}
\caption{PSNR values of restoration results according to iterations are plotted.}
\label{fig:psnr_iteration}
\end{figure}

\begin{figure}[!h]
\centering
\includegraphics[width=7cm]{./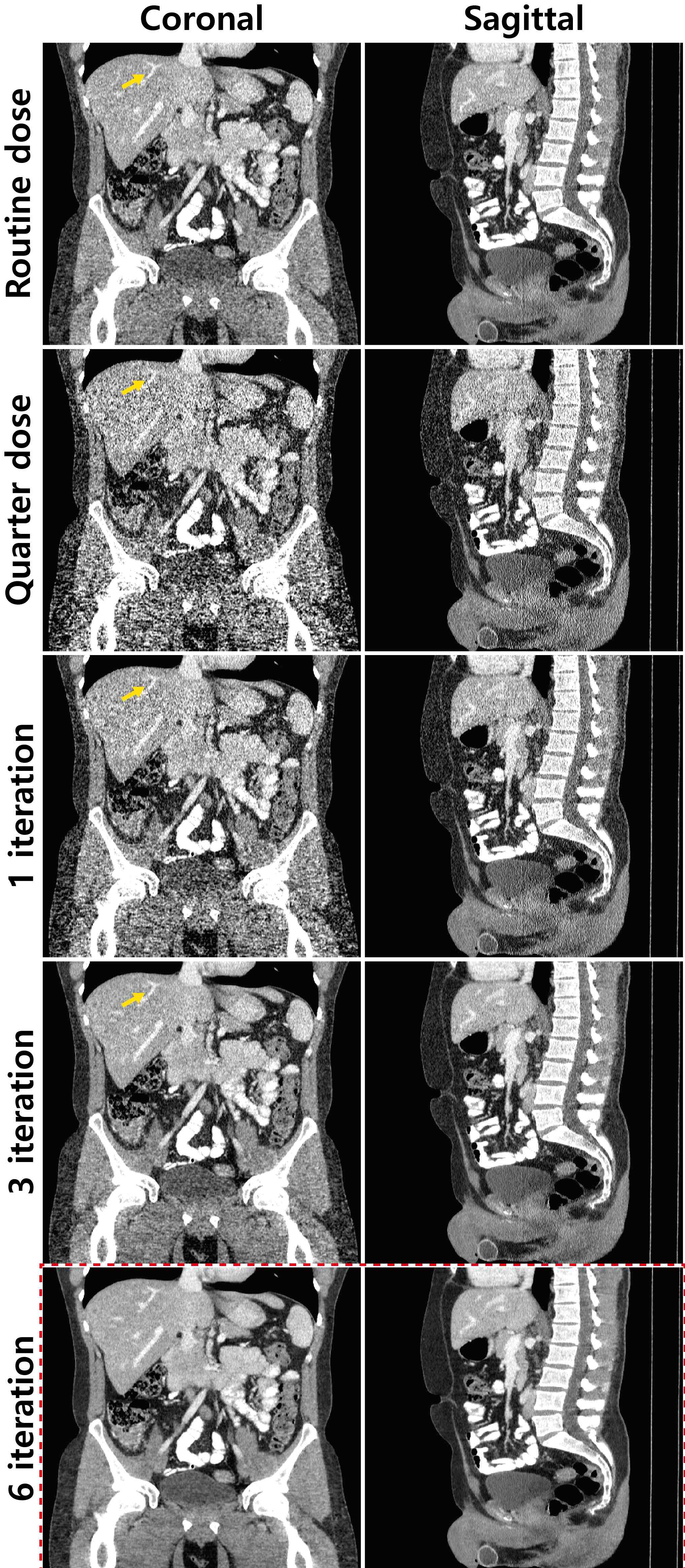}
\vspace*{-0.2cm}
\caption{Coronal and sagittal view restoration results along RNN iterations.
Intensity range is (-160,240) [HU]. The red dashline boxed images come from the last iteration.}
\label{fig:result_cnn_cor_sag}
\end{figure}

\begin{figure}[h!]
\centering
\includegraphics[width=6cm]{./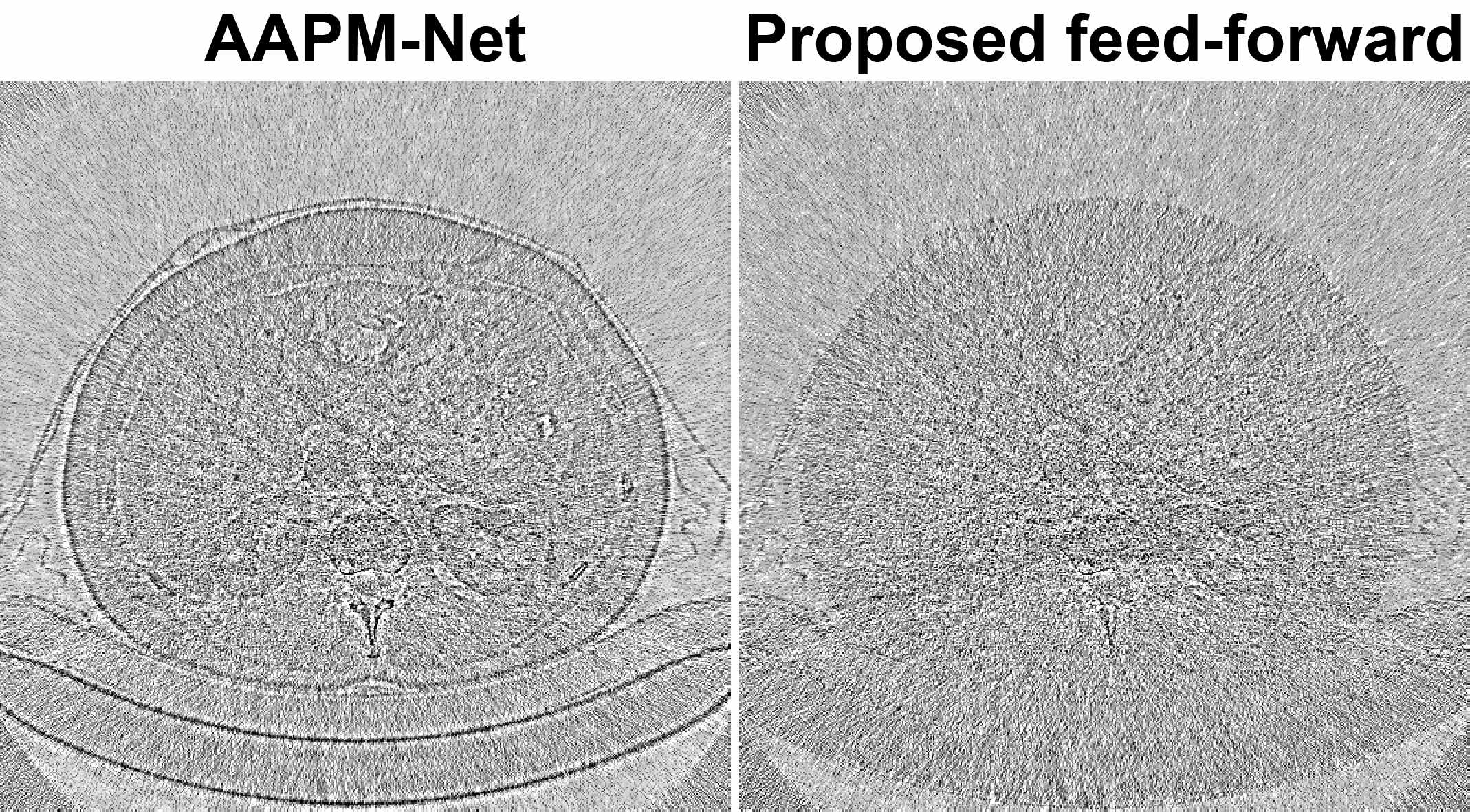}
\vspace*{-0.2cm}
\caption{Difference images between restoration images and routine dose image.
Images are same slice in the second row of Fig. \ref{fig:result_cnn}.
Intensity range is (-1100,-950) [HU].}
\label{fig:result_cnn_diff}
\end{figure}

To confirm the improvement over the AAPM-Net, we present the restoration images of one patient data in the test data set (see Fig. \ref{fig:result_cnn}).
This data have routine-dose images that can be used for subjective evaluation and objective evaluation using RMSE, PSNR, and SSIM.

In Fig. \ref{fig:result_cnn}, various kinds of slices such as liver and pelvic bones are described and the magnified images are expressed in the yellow boxes.
In AAPM-Net results, noise level was significantly reduced, but the results are blurry and loses some details.
On the other hand, the result of the proposed networks (feed-forward and RNN) results clearly shows that the improved noise reduction while maintaining the edge details and the textures which is helpful for diagnostic purpose.
More specifically, for the case of an liver image in Fig. \ref{fig:result_cnn} (a),
the proposed network results retain the fine details such as vessels in the liver and it has better sharpness than the AAPM-Net.
In  Fig. \ref{fig:result_cnn} (b), the detail of internal structure of intestine was not observed in quarter-dose images and AAPM-Net results, while they are well-recovered in proposed network results.
To examine the streaking noise reduction ability, we presented the slice which has the pelvic bone in Fig. \ref{fig:result_cnn}(c).
Proposed networks were again good at preserving the edge details such as inside region of the bones and the texture of the organ which located between the bones, while the streaking artifacts were completely removed. 
Among the feed-forward and RNN network structure,  the results by RNN suppresses more streaking artifacts compared to the feedforward neural network.
In Fig. \ref{fig:psnr_iteration}, the PSNR plots for $\Qc(f_n)$ and $f_n$  in Algorithm~\ref{alg:Pseudocode}
are illustrated.
The result shows that averaged PSNR values for RNN ($f_n$) are increased according to the iterations and it converged after 5 iterations.
On the other hand, PSNR values for the direct network output $\Qc(f_n)$ increase initially but started to decrease with iteration after the initial peak.
Eventually, the PSNR values for RNN surpass the feed-forward network.
This again confirms the convergence of the proposed RNN approach thanks to the KM iteration.  
However, our feed-forward network at the 1st iteration is also useful thanks to the computational advantage, so we provide the both results.
{In our KM iteration, 
$\mu$ is a parameter for incorporating the effect of the original low-dose image. This parameter also control the convergence behavior of KM iteration. The experimental results with various values of $\mu$ showed that the lower value tends to provide better results. However, if $\mu$ is less than 0.1, it did not converge and the image quality decreased in terms of iterations.  Thus, we set $\mu=0.1$ to get the best results while the algorithm retains the robustness.
}

The coronal and sagittal view of the restoration results by our RNN are described in Fig. \ref{fig:result_cnn_cor_sag}.
The quarter-dose images show that the noise levels are different depending on the slices.
The lower part of the images  exhibit a high noise level because the pelvic bones are included.
The results shows that the noise level was reduced gradually according to the iterations.
The last iteration reconstructed result maintains the edge details and textures which is helpful for diagnostic purpose.
The yellow arrow indicates the vessel in the liver and the last iteration result has better sharpness.
The proposed method can remove a wide range of noise levels and maintain the texture and edge information.
The difference images between the result images and routine-dose images in Fig. \ref{fig:result_cnn_diff} 
confirm the superiority of our method over AAPM-net.  %proposed method.
%Each column describes the slice in Fig. \ref{fig:result_cnn}.
The difference images of proposed network only contains the noise of low-dose X-ray CT images, while the difference images of AAPM-Net also contains the edge information.

%The difference images between the result images and routine-dose images in Fig. \ref{fig:result_cnn_diff} confirm the superiority of the wavelet domain
%deep convolutional framelets with residual network. %proposed method.
%Each column describes the slice in Fig. \ref{fig:result_cnn} (a) and (c), respectively.
%The difference images of symmetric network and WavResNet only contains the noise of low-dose X-ray CT images, while the difference images of AAPM-Net also contains the edge information.

\begin{figure}[h!]
\centering
\includegraphics[width=8cm]{./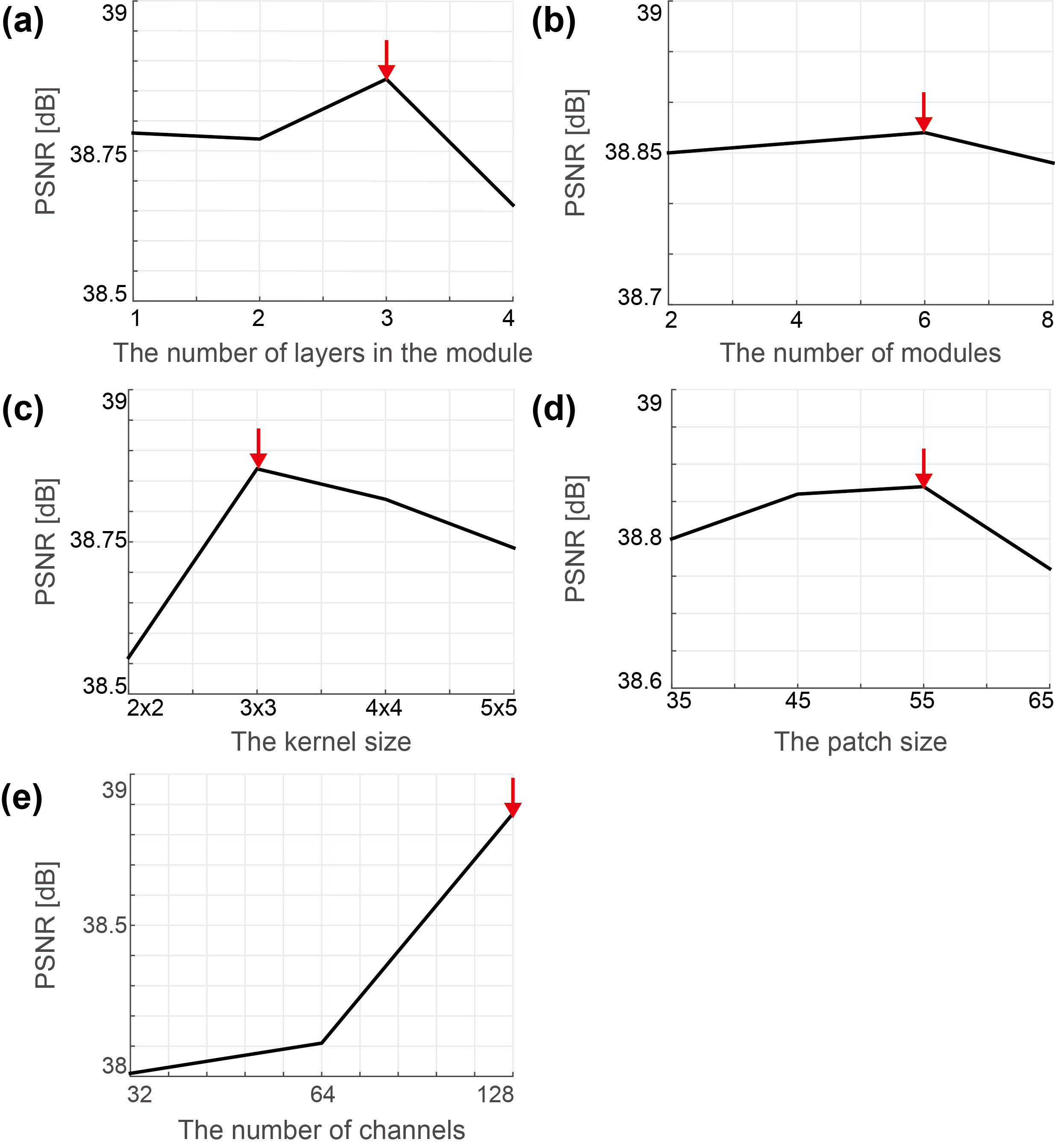}
\vspace*{-0.2cm}
\caption{{Performance dependency on various hyper-parameters of the network.}}
\label{fig:ablation_study}
\end{figure}

\subsection{Ablation study}

{
To demonstrate the advantages of the proposed method, we preformed the ablation study by
excluding some  structures from the network and applying the same training procedures. 
Table \ref{table_network_structure} presents the averaged RMSE, PSNR and SSIM index values of the results from 486 slices.
The qualitative results shows that proposed feed-forward and RNN network have the best results, and among them the RNN was better.
The PSNR and SSIM values of the symmetric network  in Fig.~\ref{fig:network_architecture} are  lower than those of the proposed methods, which confirms the signal boosting effect from the concatenation.
}

{
In addition, we have  investigated the effects of network hyper-parameters such as
the number of channels, the number of layers in module, the number of modules, the kernel size, and the patch size as
shown in
Fig. \ref{fig:ablation_study}.
Here,
network performance improves with more layers in each module until it reaches 3. With more than three layers we have found that the network is difficult to train due to many parameters to be optimized.
As the number of modules increases, network performance improves slightly at the expense of increased reconstruction time. Given the compromise between performance and reconstruction time, we used six modules for our network.
We have observed that the filter size $3\times 3$ gave the best result with reasonable processing time for real applications.
In addition, we  found that the patch size is not critical. However, the reconstruction time and its receptive field lead us to choose the patch size of $55\times 55$ for our network.
Finally, with more channels, the performance improved. But due to the memory requirement as well as to prevent overfitting, we chose 128 channels.
%If the module has more convolutional layers, the performance is improved until the number of layers increased to the 3. 
%After 3, it seems like too many parameters to optimize and it is hard to learn.
%If the number of modules is increased, the result is also slightly improved. 
%$3\times—3$ filters and 55$\times$55 patch are produced the best results and their processing times are also appropriate in practice.
}

\begin{table}[!h]
\caption{Analysis of network structure}
\label{table_network_structure}
\centering
%\vspace*{-0.5cm}
\resizebox{0.38\paperwidth}{!}{
\begin{tabular}{|c||c|c|c|}
\hline
 									& \textbf{RMSE}	& \textbf{PSNR} [dB] 	& \textbf{SSIM index} \\ \hline\hline
Exclude external bypass connection 	& 48.09			& 33.63					& 0.828 \\ \hline
Exclude concatenation layer (symmetric) & 28.43		& 38.20					& 0.893 \\ \hline \hline
%Number of channels (32 channels)	& 29.02			& 38.01					& 0.880 \\ \hline 
%Number of channels (64 channels)	& 28.71			& 38.11					& 0.887 \\ \hline \hline
Proposed feed-forward (128 channels)& 27.32			& 38.54					& 0.899 \\ \hline
Proposed RNN (128 channels)			& 26.90			& 38.70					& 0.893 \\ \hline
\end{tabular}}
\vspace*{-0.5cm}
\end{table}

%While the difference images of symmetric network and the proposed method look similar, 
%the quantitative improvement in the image metric values clearly confirms the advantages of signal boosting effect via concatenation layer.

\subsection{Low-rank approximation property}

To verify our theory that CNN is closely related to the Hankel matrix decomposition \cite{ye2017deep}, we performed
experiments to verify whether  the trained network imposes low-rank approximation of the Hankel matrix.
Specifically,  we constructed extended Hankel matrices using the output channel images from each module in Fig. \ref{fig:network_architecture}.
Then, we plotted the singular value spectrum  in Fig. \ref{fig:rank_exp}.
Blue dashed line is the result of the first module's extended Hankel matrix constructed from output feasture maps and the red solid line is the sixth module's extended Hankel matrix constructed from output features maps.
We observed that the singular value spectrum becomes compressed as we go through layers, which indicates
each layer of CNN performs the low-rank approximation of the Hankel matrix.
%When the local patch from the input images processes modules in the network gradually,
%the rank of feature map is also reduced.
These results clearly suggests the link between trained CNN and the low rank Hankel matrix decomposition.

\begin{figure}[h!]
\centering
\includegraphics[width=5cm]{./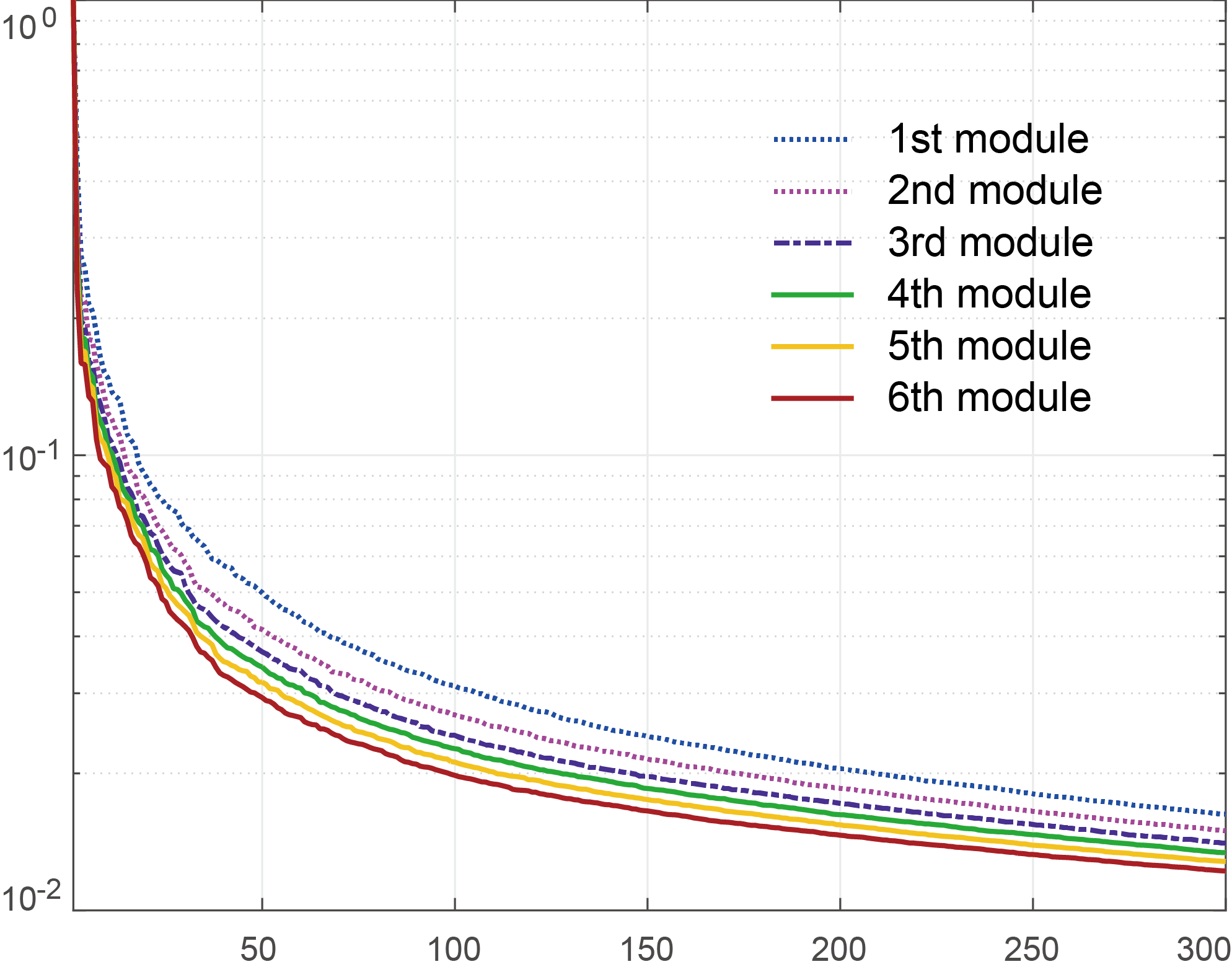}
\vspace*{-0.2cm}
\caption{The singular value spectrum of the extended Hankel matrix along layers.}
\label{fig:rank_exp}
\end{figure}

\subsection{Comparison with existing algorithms}

\begin{figure}[!hbt]
\centering
\includegraphics[width=9cm]{./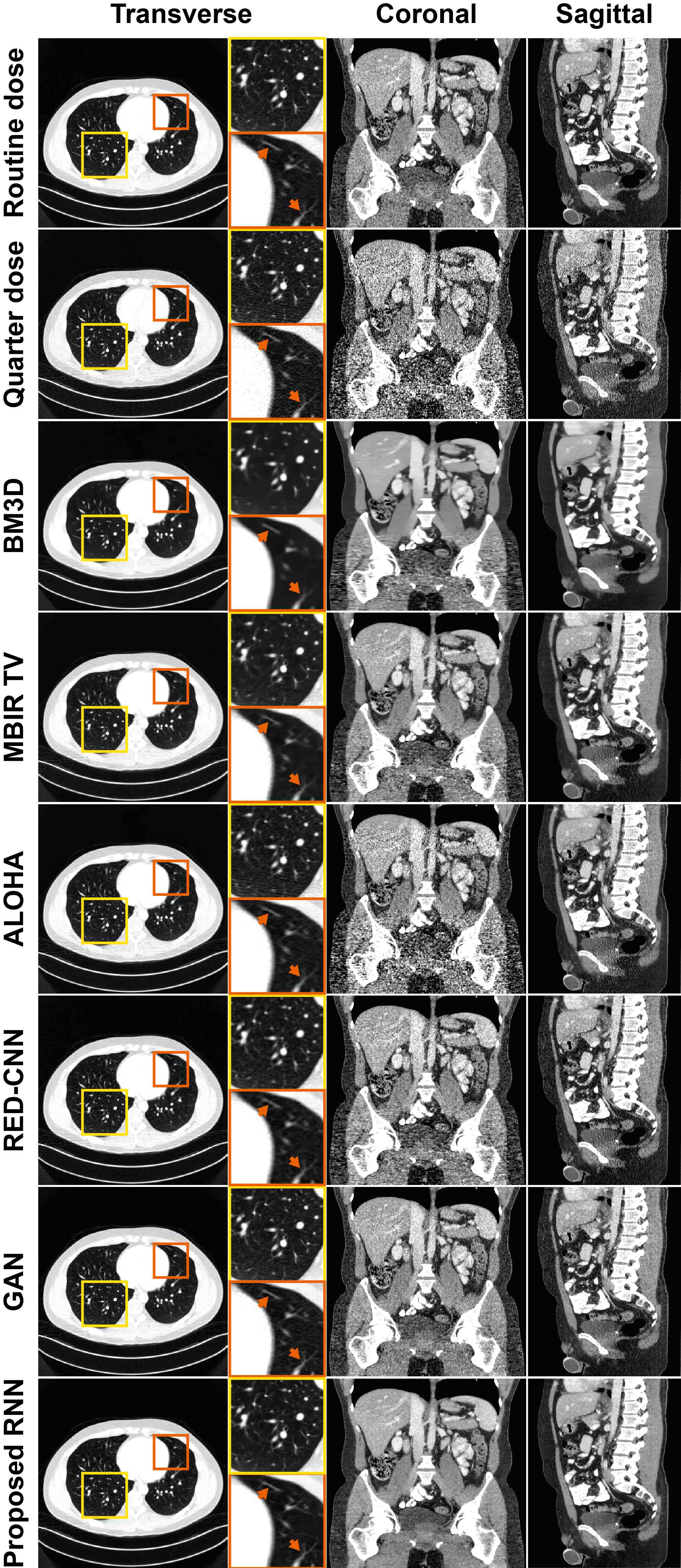}
\vspace*{-0.5cm}
\caption{Restoration results with routine-dose and quarter-dose image.
%(a) Routine-dose, (b) Quarter-dose, (c) BM3D, (d) MBIR TV, (e) ALOHA, (f) RED-CNN, (g) GAN with Euclidean loss, and (h) Proposed network
Transverse view restoration images' intensity range is adjusted to see the details in the lung.
Intensity range is (-1000,100) [HU].}
\label{fig:result_comparative}
\end{figure}

Fig. \ref{fig:result_comparative} shows the results by the comparative algorithms such as BM3D \cite{dabov2007image}, MBIR regularized by TV, ALOHA \cite{jin2016random}, RED-CNN \cite{chen2017low}, and GAN loss \cite{yang2017low}.
BM3D is a state-of-art of image denoising algorithm using nonlocal patch processing, MBIR is currently a standard algorithm of low-dose X-ray CT images and the 
RED-CNN is recently proposed deep network for low-dose X-ray CT and ALOHA is the latest low rank Hankel matrix method.
%GAN based method is a prospective framework which can be overcome the weakness of supervised learning.

The intensity of the transverse view in Fig. \ref{fig:result_comparative} is adjusted to see inside structures of the lung.
The result of BM3D loses the details in the lung such as vessels and exhibited some cartoon artifact.
The result of MBIR appears a little blurred and textures are reconstructed incorrectly.
On the other hand, deep learning based denoising algorithms have better performances than the other algorithms.
However, RED-CNN results are somewhat blurry and exhibits remaining noises in the coronal view, 
while the proposed method provides clear restoration results.

%The coronal and sagittal view of the results further confirmed  the performance of the WavResNet.
%In the lower part of the coronal and sagittal view images have the higher noise level because of the pelvic bones.
%The results of BM3D, MBIR and RED-CNN still have noises between the pelvic bones and exhibited blurry images  between the pelvic bones, whereas the proposed method has clear results with significantly reduced noise levels.

%To quantitatively evaluate the convergence behaviour and  reconstruction performance,
%WavResNet was compared with other baseline network that has identical structure but directly estimate the wavelet coefficients.
%Fig. \ref{fig:result_cnn_convergence} confirms that the learning performance of WavResNet is the best.
%Here, we also showed the AAPM-Net results.  WavResNet has about 1dB improvement
%in PSNR over AAPM-Net.

\begin{table}[!h]
\renewcommand{\arraystretch}{1.7}
\caption{Execution time (mini-batch: $55\times55\times10$,slice: $512\times512$)}
\centering
\resizebox{0.35\paperwidth}{!}{
\begin{tabular}{|c||c|c|}
\hline
\textbf{Training} 	& \textbf{Time} [mini-batch/sec] 	& \textbf{Implementation environment} \\ \hline
RED-CNN 	& 0.19 	& MatConvNet, GTX 1080 Ti \\ \hline
Proposed 	& 0.44 	& MatConvNet, GTX 1080 Ti \\ \hline \hline

\textbf{Restoration} & \textbf{Time} [slice/sec] 	& \textbf{Implementation environment} \\ \hline
BM3D 		& 2.73 	& MATLAB, i7-4770 \\ \hline
MBIR TV 	& 9.45	& MATLAB, GTX 1080 \\ \hline
ALOHA 		& 1405 	& MATLAB, GTX 1080 \\ \hline
RED-CNN 	& 0.38 	& MatConvNet, GTX 1080 Ti \\ \hline
Proposed feed-forward 	& 2.05 	& MatConvNet, GTX 1080 Ti \\ \hline
\end{tabular}
}
\end{table}

With regard to the computation time, %we should consider the algorithms in two types, the CNN framework and the other algorithms.
the CNN frameworks need learning  to train the networks. % until it converges to show the comparable performance.
Our method took 20 hours to train the network through 3 stages as described in Section  \ref{sec:method}, and RED-CNN took 6 hours to train.
For the restoration step, the CNN framework is advantageous compared to the other classical algorithms such as BM3D or MBIR TV.
Our method takes approximately 2.05 seconds per slice for restoration which have $512\times512$ pixels with MATLAB implementation using a graphical processing unit (NVidia GeForce GTX 1080 Ti).

\subsection{Contrast and resolution loss study}

To evaluate the contrast and spatial resolution loss, we compared the various algorithms using Catphan phantom at various dose levels such as 13\%, 25\%, 50\% of the original
dose.
In Fig. \ref{fig:result_acr}, the representative restoration results from 25\% dose are illustrated, where
 the magnified areas are indicated by  yellow boxes. By visual inspection, we can see that  RED-CNN and the proposed method % approaches % and  the proposed method
 preserve the resolution lines better than other methods, and among them our method was better at all resolution grids.

\begin{figure}[h!]
\centering
\includegraphics[width=8cm]{./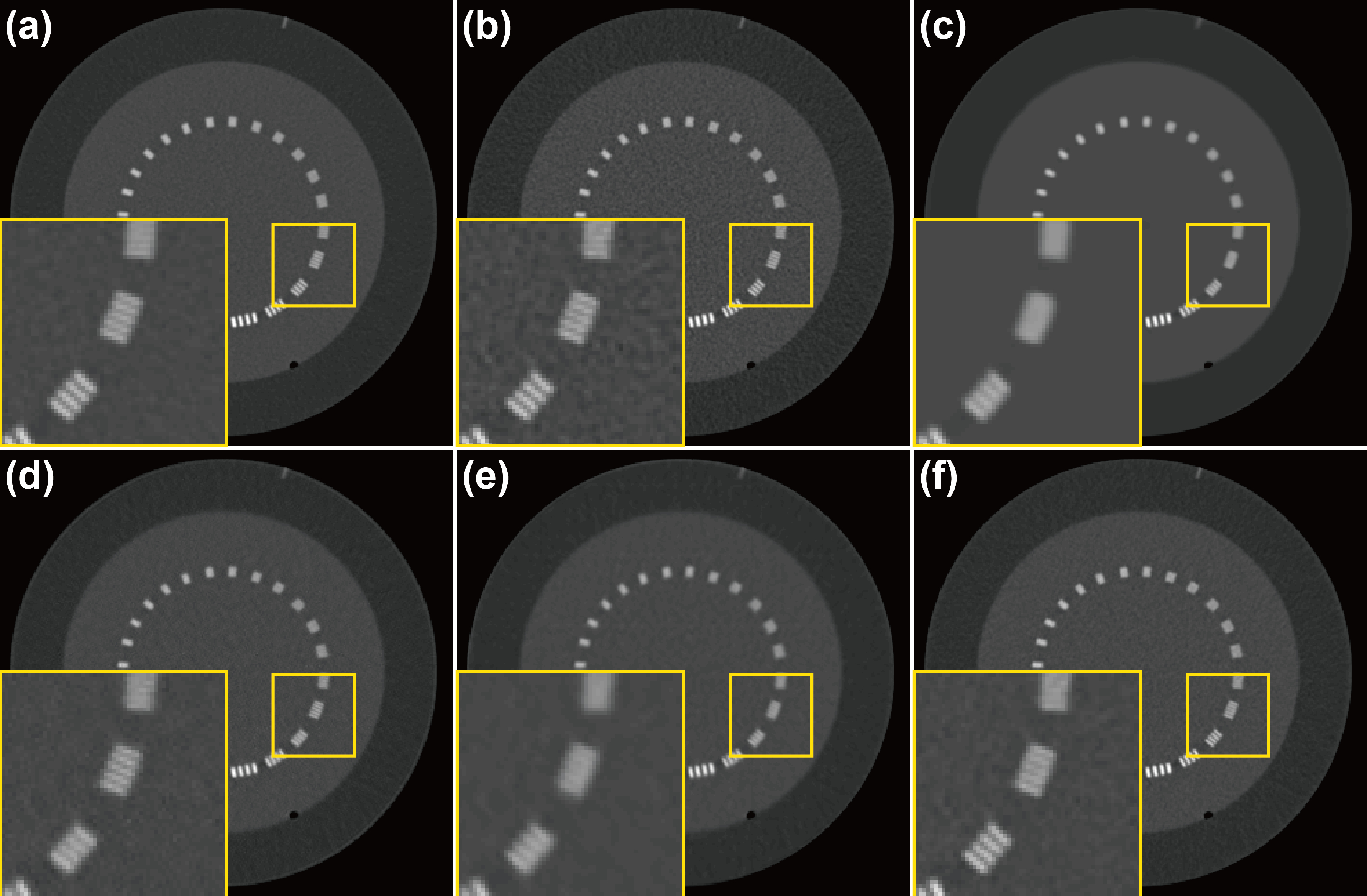}
\vspace*{-0.2cm}
\caption{
Reconstruction results of Catphan data from $\%25$ dose. Images by
 (a) routine dose, (b) quarter dose input, (c) BM3D, (d) RED-CNN, (e) GAN, and (f) the proposed feed-forward network.}
\label{fig:result_acr}
\end{figure}

To  investigate the spatial resolution loss at lose dose level, 
Fig.~\ref{fig:result_profile} illustrates the intensity profile along the two resolution grids in Catphan phantom at various dose level.
Down to quarter dose level,  the proposed feed-forward network does not exhibit significant resolution loss.
At 13\% dose,  we started to observe resolution loss especially at area (a). 
In addition, Table~\ref{tbl:contrast} shows the contrast  to noise (CNR) variations at various radiation dose levels.
%Compared to the full-dose FBP case, our  method provides nearly same contrast to noise (CNR) ratio at 50\% dose.
%In addition, %  and
%
The CNR values  of our method gradually decreases from 50\% to 13\%, but
they outperformed the CNR values of FBP results at the same dose levels. These results clearly confirm
the robustness of the proposed method at various dose levels.

\begin{figure}[h!]
\centering
\includegraphics[width=8.cm]{./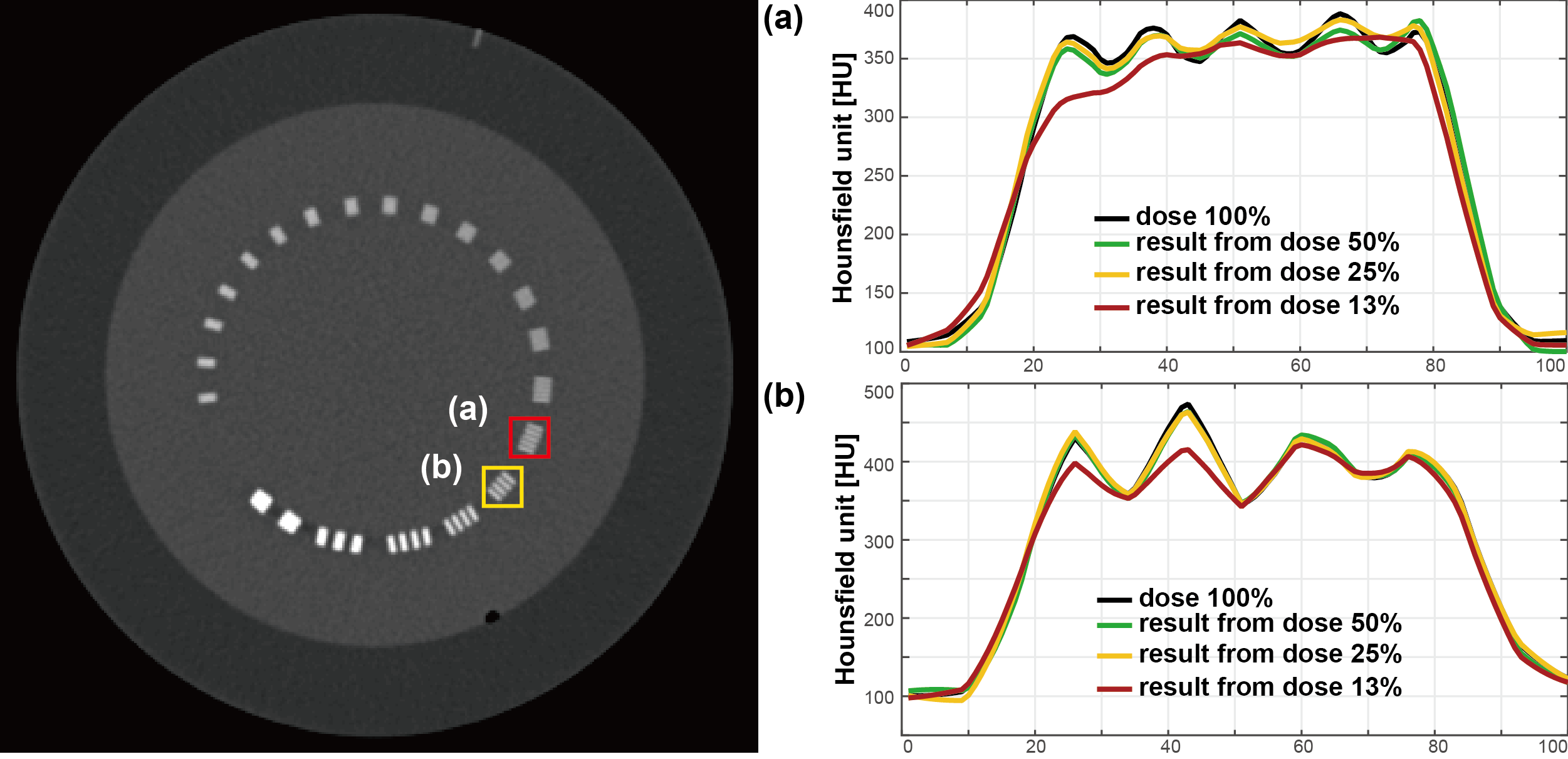}
\vspace*{-0.1cm}
\caption{
Resolution profile at various level by the proposed feed-forward network. 
}
\label{fig:result_profile}
\end{figure}

%\begin{tabular}{|c|c|c|c|c|c|}
%\hline
%\multirow{3}{*}{A} & \multicolumn{2}{c|}{User B} & %
%    \multicolumn{2}{c|}{User C} & \multirow{3}{*}{D}\\
%\cline{2-5}
% & \multicolumn{2}{c|}{Value} & \multicolumn{2}{c|}{Value} & \\
%\cline{2-5}
% & B1 & B2 & C1 & C2 & \\
%\hline
% & & & & & \\
%\hline
% & & & & & \\
%\hline

\begin{table}[!h]
\renewcommand{\arraystretch}{1.3}
\caption{CNR evaluation at various dose levels}
\label{tbl:contrast}
\centering
\resizebox{0.4\paperwidth}{!}{
\begin{tabular}{|c|c|c|c|c|c|c|c|}
\hline
Radiation dose	level 		& 100\%		&    \multicolumn{2}{c|}{50\%}	&   \multicolumn{2}{c|}{25\%} &   \multicolumn{2}{c|}{13\%} \\ \hline\hline
Alogrithm  & FBP & FBP & Proposed  & FBP & Proposed & FBP & Proposed \\
\hline
 Contrast & 8.49 &	8.44 	& 	8.44 & 	8.96 & 	 	9.46  &	10.85 	&	11.08 \\
Noise	& 6.38 & 8.94 	& 	6.24  &	13.69 	& 	9.46 	 & 17.26 	&	11.45 \\ \hline
CNR	       & 1.33 &  0.94 	  &	1.35  &	0.65 	&1.00 	& 0.63 	&	0.97 	\\ 
 \hline 
\end{tabular}
\vspace*{-0.5cm}
}
\end{table}

\subsection{Evaluation of lesion detection}
{
To verify the proposed method, 
we have performed task-driven experiment to evaluate the lesion detection performance by quarter-dose FBP, MBIR TV and a proposed method. We used 20 test data sets from the 2016 Low dose CT grand challenge. A board-certified radiologist (Won Chang) with seven years of experience in liver CT interpretation assessed the data set and recorded the exact locations of the lesions with blind to the reconstruction methods. The detection rates for solid focal hepatic lesions were compared using the McNemar test with Bonferroni correction and a difference with a $p>0.017$ was considered significant.
Since small number of radiologist involved in this study, it was not statistically significant, but the proposed method showed a significant higher lesion detection rate than FBP (73\% vs. 57\%, p-value=0.0412) and MBIR(73\% vs. 62\%, p-value=0.1336).
 With more radiologists involved, we are currently studying large scale statistical evaluation of the method, which will be reported later in a clinical journal.
%
%
%we did the evaluation of lesion detection for quarter-dose FBP, MBIR TV and a proposed method. 
%We used 20 test data sets from the 2016 Low dose CT grand challenge. 
%A board-certified radiologist with seven years of experience in liver CT interpretation assessed the data set and recorded the exact locations of the lesions with blind to the reconstruction methods. 
%The detection rates for solid focal hepatic lesions were compared using the McNemar test with Bonferroni correction and a difference with a P>0.017 was considered significant.
%Although proposed method showed a higher lesion detection rate than FBP(73\% vs. 57\%, p-value=0.0412) and MBIR(73\% vs. 62\%, p-value=0.1336), it was not statistically significant. We believed that further study including more focal lesions would be required to prove the statistical significance.
}

\begin{table}[!bt]
\renewcommand{\arraystretch}{1.3}
\caption{{Evaluation of lesion detection}}
\label{table_lesion_detection}
\centering
\resizebox{.35\paperwidth}{!}{
\begin{tabular}{|c||c|c|c|c|}
\hline
 						& Ground-truth 	& Quarter-dose	& MBIR TV 	& Proposed \\ \hline\hline
The number of lesions 	& 37			& 21	 		& 23 		& 27 \\ \hline
Lesion detection rate 	& - 			& 57\% 			& 62\% 		& 73\% \\ \hline
\end{tabular}}
\end{table}

\section{Conclusion}
\label{sec:conclusion}

In this paper, we proposed a deep convolutional framelet-based denoising algorithm for low-dose X-ray CT restoration by synergistically combining the proven convergence
of the classical framelet-based algorithm and the expressive power of deep learning.
To provide the theoretical background for performance improvement, we employed the recent proposal of deep convolutional framelets that interprets a deep learning as a multilayer implementation of convolutional framelets with ReLU nonlinearity. 
Our theory resulted in two  network structures: a feed-forward and RNN architectures.
Moreover, by combining the redundant global transform, residual network  (ResNet) and signal boosting from concatenation layers,
the proposed feed-forward and RNN network provided significant improvement compared to the prior work by retaining the detailed texture.
Using extensive experimental results,
we showed that the proposed network is good at streaking noise reduction and preserving the texture details of the organs while the lesion information is maintained.

\section*{Acknowledgement}

The authors would like to thanks Dr. Cynthia MaCollough,  the Mayo Clinic, the American Association of Physicists in Medicine (AAPM), and grant EB01705 and EB01785 from the National
Institute of Biomedical Imaging and Bioengineering for providing the Low-Dose CT Grand Challenge data set.
This work is supported by Korea Science and Engineering Foundation, Grant number NRF-2016R1A2B3008104.
This study received technical support from Siemens Healthcare (Erlangen, Germany) to simulate CT images of various low dose levels.
We thank Seongyong Pak (Siemens Healthcare Ltd, Korea) for technical support on simulating CT images of low dose levels.
The funders had no role in study design, data collection and analysis, decision to publish, or preparation of the manuscript.

\appendices

\section{Mathematical Preliminaries}

 For a given mapping $T: D \rightarrow \Hc$,
the set of the {\em fixed points} of an operator $T:D \rightarrow D$ is denoted by
$\fix T = \{ x\in D~|~ Tx=x \}.$
%Then,  we need the following standard results  from convex analysis \cite{bauschke2011convex}:
%\begin{definition}\label{def:firm}
%Let $D$ be a nonempty subset of $\Hc$ and let $T: D \rightarrow \Hc$. 
 Then, $T$ is called 
%\begin{enumerate}
%\item {\em firmly non-expansive} if 
%\begin{equation}\label{eq:firm}
%\|Tx - Ty \|^2 + \|(I-T)x- (I-T)y\|^2 \leq \|x-y\|^2, \quad 
%\end{equation}
%for all $x,y \in D.$
%\item 
{\em non-expansive}  if 
\begin{equation}\label{eq:nonexp}
\|Tx - Ty \| \leq  \| x-y \| ,\quad   \forall x,y \in D ,
\end{equation}
%\end{enumerate}
%\end{definition}
%It is clear that if $T$ is firmly non-expansive,  it is non-expansive.
%The following standard result reveals an important class of firmly non-expansive operators \cite{bauschke2011convex}:
%\begin{lemma} \cite{bauschke2011convex}.\label{lem:nonexp_prox}
%Let  $f$ be a proper lower semi-continuous convex function and $\gamma \in \Rd_{++}$. Then,
%$ \prox_{\gamma f}$ is firmly non-expansive, where
%the {\em  proximal mapping} 
% is defined as
%\begin{equation} \label{eq:prox_cal}
%\mathrm{Prox}_f x  = \arg\min_{y \in \Hc} f(y) + \frac{1}{2}\|y-x\|^2 \ .
%\end{equation}
%\end{lemma}
%We also need the following  lemma regarding the firmly non-expansiveness of a composite mapping.
%\begin{lemma} \cite{bauschke2011convex}.\label{lem:composite} 
%Let $T:D \rightarrow D$ be firmly non-expansive, and $L: \Hc \rightarrow D$ be a bounded linear mapping. Then,
%$L^* \circ T \circ L$ is firmly non-expansive.
%\end{lemma}
%
%
Then, we have the following convergence theorem for the non-expansive operator:
 \begin{theorem}[Krasnoselski-Mann algorithm]\cite{bauschke2011convex}  \label{thm:km}
 Let $D$ be a nonempty
closed convex subset of $H$, let $T : D \mapsto D$ be a nonexpansive operator such that
$\fix T\neq \emptyset$, let $(\lambda_n)$ be a sequence in $[0, 1]$ such that
$\sum_{n=1}^\infty \lambda_n(1-\lambda_n) = +\infty$ and let $f_0 \in D$.
Consider the following seqeunce:
\begin{eqnarray}\label{eq:myiter}
 f_{n+1} = f_n + \lambda_n  (T f_n - f_n).
 \end{eqnarray}
Then,  the sequence $f_n$ converges to a point in $\fix T$.
%\end{enumerate}
\end{theorem}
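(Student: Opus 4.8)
The plan is to recognize Algorithm~\ref{alg:Pseudocode} as an instance of the Krasnoselskii--Mann iteration \eqref{eq:myiter} and then invoke Theorem~\ref{thm:km} as a black box. Define the operator $T:\Rd^n\to\Rd^n$ by $Tf := \mu g + (1-\mu)\Qc(f)$, so that the two inner updates $\bar f_{n+1} = \mu g + (1-\mu)q_n$ and $f_{n+1} = f_n + \lambda_n(\bar f_{n+1}-f_n)$ combine to $f_{n+1} = f_n + \lambda_n(Tf_n - f_n)$, exactly the form \eqref{eq:myiter}. With this identification the theorem reduces to verifying the three hypotheses of Theorem~\ref{thm:km}: that $T$ is nonexpansive, that $\fix T \neq \emptyset$, and that the relaxation parameters satisfy $\sum_n \lambda_n(1-\lambda_n)=+\infty$.

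The crux is to control the Lipschitz behaviour of the network map $\Qc$. First I would observe that $\Qc$ is a finite composition of affine convolution layers and pointwise ReLU activations, together with the linear concatenation and skip connections of Fig.~\ref{fig:network_architecture}. Each convolution is a bounded linear operator (multiplication by a structured matrix built from the learned filters) and hence has finite operator norm; ReLU, being the pointwise positive part, is $1$-Lipschitz; and the concatenation/skip combinations are likewise Lipschitz. Consequently $\Qc$ is globally Lipschitz with a finite constant $L_\Qc$ bounded by the product of the layerwise operator norms. Then for any $f,h\in\Rd^n$,
\begin{eqnarray*}
\|Tf - Th\| &=& (1-\mu)\,\|\Qc(f)-\Qc(h)\| \\
&\le& (1-\mu)\,L_\Qc\,\|f-h\| .
\end{eqnarray*}
Choosing $\mu\in(0,1)$ with $(1-\mu)L_\Qc<1$ — explicitly any $\mu>1-1/L_\Qc$ when $L_\Qc>1$, and any $\mu\in(0,1)$ when $L_\Qc\le 1$ — makes $T$ a strict contraction, and in particular nonexpansive in the sense of \eqref{eq:nonexp}. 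This is precisely the parameter $\mu$ whose existence the theorem asserts.

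It remains to secure a fixed point and close the argument. Since $\Rd^n$ is complete and $T$ maps it into itself as a strict contraction, the Banach fixed point theorem yields a unique $f^\star$ with $Tf^\star=f^\star$, so $\fix T\neq\emptyset$. Taking the step sizes $0<\lambda_n<1$ of Algorithm~\ref{alg:Pseudocode} to satisfy $\sum_n\lambda_n(1-\lambda_n)=+\infty$ (for instance with $\lambda_n$ bounded away from $0$ and $1$), all hypotheses of Theorem~\ref{thm:km} are met, and we conclude that the sequence $(f_n)$ generated by Algorithm~\ref{alg:Pseudocode} converges to $f^\star\in\fix T$, which is exactly the stationary point of \eqref{eq:iter}.

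The main obstacle is the Lipschitz bound on $\Qc$: the entire argument hinges on $\Qc$ admitting a \emph{finite} Lipschitz constant, so the proof must make explicit that every convolutional layer has bounded operator norm and that all nonlinearities and aggregation operations are nonexpansive. A secondary point to flag is a mild strengthening of the hypotheses: the statement of Algorithm~\ref{alg:Pseudocode} imposes only $0<\lambda_n<1$, whereas Theorem~\ref{thm:km} additionally requires $\sum_n\lambda_n(1-\lambda_n)=+\infty$, a condition that should be added to the chosen relaxation schedule.
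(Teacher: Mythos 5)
Your proposal does not prove the stated theorem; it proves a different one. The statement in question is the Krasnoselskii--Mann convergence theorem itself (Theorem~\ref{thm:km}, which the paper quotes from \cite{bauschke2011convex} without proof), whereas you invoke exactly this theorem as a black box and instead verify its hypotheses for the specific operator $Tf=\mu g+(1-\mu)\Qc(f)$. What you have written is, almost line for line, the paper's Appendix~B proof of Theorem~\ref{thm:convergence} --- with two improvements: you replace the mean-value/Jacobian bound by a cleaner layerwise Lipschitz composition, and your Banach fixed-point step supplies the verification of $\fix T\neq\emptyset$ that the paper's appendix leaves implicit. But as a proof of Theorem~\ref{thm:km} the argument is circular, since the convergence assertion to be established is assumed wholesale. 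Note also that your construction makes $T$ a \emph{strict contraction}, for which Banach's theorem already gives convergence of the plain Picard iteration; the whole point of Theorem~\ref{thm:km} is that it handles merely nonexpansive $T$, which may have Lipschitz constant exactly $1$ and a whole continuum of fixed points, and that regime is never addressed in your proposal.

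A genuine proof runs as follows and uses none of your ingredients. For any $y\in\fix T$, nonexpansiveness and the Hilbert-space convexity identity give
\begin{align*}
\|f_{n+1}-y\|^2 &= \|(1-\lambda_n)(f_n-y)+\lambda_n(Tf_n-y)\|^2\\
&= (1-\lambda_n)\|f_n-y\|^2+\lambda_n\|Tf_n-Ty\|^2-\lambda_n(1-\lambda_n)\|Tf_n-f_n\|^2\\
&\le \|f_n-y\|^2-\lambda_n(1-\lambda_n)\|Tf_n-f_n\|^2 ,
\end{align*}
which yields (i) Fej\'er monotonicity of $(f_n)$ with respect to $\fix T$ and (ii) $\sum_n \lambda_n(1-\lambda_n)\|Tf_n-f_n\|^2<\infty$. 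A short computation using nonexpansiveness of $T$ shows $\|Tf_n-f_n\|$ is nonincreasing, so the hypothesis $\sum_n\lambda_n(1-\lambda_n)=+\infty$ forces $\|Tf_n-f_n\|\to 0$ (asymptotic regularity); the demiclosedness principle for $\mathrm{Id}-T$ then places every weak cluster point of $(f_n)$ in $\fix T$, and Fej\'er monotonicity gives uniqueness of the weak cluster point, hence weak convergence to a point of $\fix T$ (norm convergence in the finite-dimensional setting the paper actually uses). The convexity identity, Fej\'er monotonicity, asymptotic regularity, and demiclosedness are the entire content of the theorem, and the gap in your proposal is that all of them are missing. Your closing remark that Algorithm~\ref{alg:Pseudocode} only imposes $0<\lambda_n<1$ while the theorem needs $\sum_n\lambda_n(1-\lambda_n)=+\infty$ is a fair observation, but it concerns the application in Theorem~\ref{thm:convergence}, not the statement you were asked to prove.
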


\section{Proof of Theorem~\ref{thm:convergence}}

Let the mapping $T$ be defined by
\begin{eqnarray*}
T(f) 
&:=& \mu  g + (1-\mu )  \Qc(f)  %\Wc^\top \Wc f_n \  .
\end{eqnarray*}
where  $\Qc$  is the deep convolutional framelet  network output.
Our goal is to show that the operator $T$ is non-expansive.
Note that
\begin{eqnarray*}
\|Tx - Ty \|  &=& \|(1-\mu )  \Qc (x) - (1-\mu ) \Qc(y)\| \\
&= &(1-\mu)\|\Qc(x) - \Qc(y) \| \\
&\leq & (1-\mu) \|\Qc'(z)\|\|x-y\|
\end{eqnarray*}
where  $\Qc'(z)$ denotes the Jacobian of the network at $z$, and
we use  the mean value theorem for the last inequality.
Therefore, if the Jacobian of the deep convolutional framelet is finite, we can choose $\mu=1-1/\max_{z\in D}\|\Qc'(z)\|$ such that
$$\|Tx - Ty \|  \leq \|x-y\|,$$
i.e. $T$ is non-expansive.
Thus, the remaining step is to show that $\max_{z\in D}\|\Qc'(z)\| < \infty$.
This is true because the convolutional framelets consists of convolutional filters with finite coefficients
and ReLU, so the Jacobian can be represented as the  product of the filter norms \cite{sokolic2017robust}.
 Thus,  by denoting $\bar f_{n+1} = T f_n$, Theorem~\ref{thm:km} informs that
 our KM iteration for deep convolutional framelet inpainting
converges. This concludes the proof.

%\bibliographystyle{IEEEtran}
%\bibliography{IEEEabrv,convframelets}

\begin{thebibliography}{10}
\providecommand{\url}[1]{#1}
\csname url@samestyle\endcsname
\providecommand{\newblock}{\relax}
\providecommand{\bibinfo}[2]{#2}
\providecommand{\BIBentrySTDinterwordspacing}{\spaceskip=0pt\relax}
\providecommand{\BIBentryALTinterwordstretchfactor}{4}
\providecommand{\BIBentryALTinterwordspacing}{\spaceskip=\fontdimen2\font plus
\BIBentryALTinterwordstretchfactor\fontdimen3\font minus
  \fontdimen4\font\relax}
\providecommand{\BIBforeignlanguage}[2]{{%
\expandafter\ifx\csname l@#1\endcsname\relax
\typeout{** WARNING: IEEEtran.bst: No hyphenation pattern has been}%
\typeout{** loaded for the language `#1'. Using the pattern for}%
\typeout{** the default language instead.}%
\else
\language=\csname l@#1\endcsname
\fi
#2}}
\providecommand{\BIBdecl}{\relax}
\BIBdecl

\bibitem{beister2012iterative}
M.~Beister, D.~Kolditz, and W.~A. Kalender, ``Iterative reconstruction methods
  in x-ray {CT},'' \emph{Physica medica}, vol.~28, no.~2, pp. 94--108, 2012.

\bibitem{ramani2012splitting}
S.~Ramani and J.~A. Fessler, ``A splitting-based iterative algorithm for
  accelerated statistical x-ray {CT} reconstruction,'' \emph{IEEE transactions
  on medical imaging}, vol.~31, no.~3, pp. 677--688, 2012.

\bibitem{sidky2008image}
E.~Y. Sidky and X.~Pan, ``Image reconstruction in circular cone-beam computed
  tomography by constrained, total-variation minimization,'' \emph{Physics in
  medicine and biology}, vol.~53, no.~17, p. 4777, 2008.

\bibitem{krizhevsky2012imagenet}
A.~Krizhevsky, I.~Sutskever, and G.~E. Hinton, ``Imagenet classification with
  deep convolutional neural networks,'' in \emph{Advances in neural information
  processing systems}, 2012, pp. 1097--1105.

\bibitem{burger2012image}
H.~C. Burger, C.~J. Schuler, and S.~Harmeling, ``Image denoising: Can plain
  neural networks compete with {BM3D}?'' in \emph{2012 IEEE Conference on
  Computer Vision and Pattern Recognition (CVPR)}.\hskip 1em plus 0.5em minus
  0.4em\relax IEEE, 2012, pp. 2392--2399.

\bibitem{mao2016image}
X.~Mao, C.~Shen, and Y.-B. Yang, ``Image restoration using very deep
  convolutional encoder-decoder networks with symmetric skip connections,'' in
  \emph{Advances in Neural Information Processing Systems}, 2016, pp.
  2802--2810.

\bibitem{zhang2016beyond}
K.~Zhang, W.~Zuo, Y.~Chen, D.~Meng, and L.~Zhang, ``Beyond a gaussian denoiser:
  Residual learning of deep {CNN} for image denoising,'' \emph{arXiv preprint
  arXiv:1608.03981}, 2016.

\bibitem{ronneberger2015u}
O.~Ronneberger, P.~Fischer, and T.~Brox, ``U-net: Convolutional networks for
  biomedical image segmentation,'' in \emph{International Conference on Medical
  Image Computing and Computer-Assisted Intervention}.\hskip 1em plus 0.5em
  minus 0.4em\relax Springer, 2015, pp. 234--241.

\bibitem{kim2016accurate}
J.~Kim, J.~Kwon~Lee, and K.~Mu~Lee, ``Accurate image super-resolution using
  very deep convolutional networks,'' in \emph{Proceedings of the IEEE
  Conference on Computer Vision and Pattern Recognition}, 2016, pp. 1646--1654.

\bibitem{wang2016accelerating}
S.~Wang, Z.~Su, L.~Ying, X.~Peng, S.~Zhu, F.~Liang, D.~Feng, and D.~Liang,
  ``Accelerating magnetic resonance imaging via deep learning,'' in \emph{2016
  IEEE 13th International Symposium on Biomedical Imaging (ISBI)}.\hskip 1em
  plus 0.5em minus 0.4em\relax IEEE, 2016, pp. 514--517.

\bibitem{hammernik2016learning}
K.~Hammernik, F.~Knoll, D.~Sodickson, and T.~Pock, ``Learning a variational
  model for compressed sensing {MRI} reconstruction,'' in \emph{Proceedings of
  the International Society of Magnetic Resonance in Medicine (ISMRM)}, 2016.

\bibitem{sun2016deep}
J.~Sun, H.~Li, Z.~Xu \emph{et~al.}, ``Deep admm-net for compressive sensing
  mri,'' in \emph{Advances in Neural Information Processing Systems}, 2016, pp.
  10--18.

\bibitem{kang2017deep}
E.~Kang, J.~Min, and J.~C. Ye, ``A deep convolutional neural network using
  directional wavelets for low-dose x-ray ct reconstruction,'' \emph{Medical
  Physics}, vol.~44, no.~10, 2017.

\bibitem{chen2017low}
H.~Chen, Y.~Zhang, M.~K. Kalra, F.~Lin, Y.~Chen, P.~Liao, J.~Zhou, and G.~Wang,
  ``Low-dose ct with a residual encoder-decoder convolutional neural network,''
  \emph{IEEE transactions on medical imaging}, vol.~36, no.~12, pp. 2524--2535,
  2017.

\bibitem{chen2017lowBOE}
H.~Chen, Y.~Zhang, W.~Zhang, P.~Liao, K.~Li, J.~Zhou, and G.~Wang, ``Low-dose
  {CT} via convolutional neural network,'' \emph{Biomedical optics express},
  vol.~8, no.~2, pp. 679--694, 2017.

\bibitem{adler2017learned}
J.~Adler and O.~{\"O}ktem, ``Learned primal-dual reconstruction,'' \emph{arXiv
  preprint arXiv:1707.06474}, 2017.

\bibitem{chen2017learned}
H.~Chen, Y.~Zhang, W.~Zhang, H.~Sun, P.~Liao, K.~He, J.~Zhou, and G.~Wang,
  ``Learned experts' assessment-based reconstruction network (" learn") for
  sparse-data ct,'' \emph{arXiv preprint arXiv:1707.09636}, 2017.

\bibitem{wurfl2016deep}
T.~W{\"u}rfl, F.~C. Ghesu, V.~Christlein, and A.~Maier, ``Deep learning
  computed tomography,'' in \emph{International Conference on Medical Image
  Computing and Computer-Assisted Intervention}.\hskip 1em plus 0.5em minus
  0.4em\relax Springer, 2016, pp. 432--440.

\bibitem{yang2017ct}
Q.~Yang, P.~Yan, M.~K. Kalra, and G.~Wang, ``Ct image denoising with perceptive
  deep neural networks,'' \emph{arXiv preprint arXiv:1702.07019}, 2017.

\bibitem{wang2016perspective}
G.~Wang, ``A perspective on deep imaging,'' \emph{IEEE Access}, vol.~4, pp.
  8914--8924, 2016.

\bibitem{yang2017low}
Q.~Yang, P.~Yan, Y.~Zhang, H.~Yu, Y.~Shi, X.~Mou, M.~K. Kalra, and G.~Wang,
  ``Low dose {CT} image denoising using a generative adversarial network with
  {W}asserstein distance and perceptual loss,'' \emph{arXiv preprint
  arXiv:1708.00961}, 2017.

\bibitem{wolterink2017generative}
J.~M. Wolterink, T.~Leiner, M.~A. Viergever, and I.~Isgum, ``Generative
  adversarial networks for noise reduction in low-dose ct,'' \emph{IEEE
  Transactions on Medical Imaging}, 2017.

\bibitem{wu2017iterative}
D.~Wu, K.~Kim, G.~El~Fakhri, and Q.~Li, ``Iterative low-dose ct reconstruction
  with priors trained by artificial neural network,'' \emph{IEEE transactions
  on medical imaging}, vol.~36, no.~12, pp. 2479--2486, 2017.

\bibitem{gupta2017cnn}
H.~Gupta, K.~H. Jin, H.~Q. Nguyen, M.~T. McCann, and M.~Unser, ``{CNN}-based
  projected gradient descent for consistent image reconstruction,'' \emph{arXiv
  preprint arXiv:1709.01809}, 2017.

\bibitem{ye2017deep}
J.~C. Ye, Y.~S. Han, and E.~Cha, ``Deep convolutional framelets: A general deep
  learning framework for inverse problems,'' \emph{SIAM Journal on Imaging
  Sciences (in press), also available as arXiv preprint arXiv:1707.00372},
  2018.

\bibitem{li2014wavelet}
M.~Li, Z.~Fan, H.~Ji, and Z.~Shen, ``Wavelet frame based algorithm for 3d
  reconstruction in electron microscopy,'' \emph{SIAM Journal on Scientific
  Computing}, vol.~36, no.~1, pp. B45--B69, 2014.

\bibitem{dong2017image}
B.~Dong, Q.~Jiang, and Z.~Shen, ``Image restoration: wavelet frame shrinkage,
  nonlinear evolution pdes, and beyond,'' \emph{Multiscale Modeling \&
  Simulation}, vol.~15, no.~1, pp. 606--660, 2017.

\bibitem{yin2017tale}
R.~Yin, T.~Gao, Y.~M. Lu, and I.~Daubechies, ``A tale of two bases:
  Local-nonlocal regularization on image patches with convolution framelets,''
  \emph{SIAM Journal on Imaging Sciences}, vol.~10, no.~2, pp. 711--750, 2017.

\bibitem{cai2008framelet}
J.-F. Cai, R.~H. Chan, and Z.~Shen, ``A framelet-based image inpainting
  algorithm,'' \emph{Applied and Computational Harmonic Analysis}, vol.~24,
  no.~2, pp. 131--149, 2008.

\bibitem{cai2009convergence}
J.-F. Cai, R.~H. Chan, L.~Shen, and Z.~Shen, ``Convergence analysis of tight
  framelet approach for missing data recovery,'' \emph{Advances in
  Computational Mathematics}, vol.~31, no. 1-3, pp. 87--113, 2009.

\bibitem{kang2017wavelet}
E.~Kang, J.~Yoo, and J.~C. Ye, ``Wavelet residual network for low-dose {CT} via
  deep convolutional framelets,'' \emph{arXiv preprint arXiv:1707.09938}, 2017.

\bibitem{duffin1952class}
R.~J. Duffin and A.~C. Schaeffer, ``A class of nonharmonic {F}ourier series,''
  \emph{Transactions of the American Mathematical Society}, vol.~72, no.~2, pp.
  341--366, 1952.

\bibitem{jin2015sparse+}
K.~H. Jin and J.~C. Ye, ``Sparse+ low rank decomposition of annihilating
  filter-based {H}ankel matrix for impulse noise removal,'' \emph{arXiv
  preprint arXiv:1510.05559}, 2015.

\bibitem{jin2016mri}
K.~H. Jin, J.-Y. Um, D.~Lee, J.~Lee, S.-H. Park, and J.~C. Ye, ``Mri artifact
  correction using sparse+ low-rank decomposition of annihilating filter-based
  hankel matrix,'' \emph{Magnetic resonance in medicine}, vol.~78, no.~1, pp.
  327--340, 2017.

\bibitem{min2015fast}
J.~Min, L.~Carlini, M.~Unser, S.~Manley, and J.~C. Ye, ``Fast live cell imaging
  at nanometer scale using annihilating filter-based low-rank {H}ankel matrix
  approach,'' in \emph{SPIE Optical Engineering+ Applications}.\hskip 1em plus
  0.5em minus 0.4em\relax International Society for Optics and Photonics, 2015,
  pp. 95\,970V--95\,970V.

\bibitem{candes2009exact}
E.~J. Cand{\`e}s and B.~Recht, ``Exact matrix completion via convex
  optimization,'' \emph{Foundations of Computational mathematics}, vol.~9,
  no.~6, p. 717, 2009.

\bibitem{cai2010singular}
J.-F. Cai, E.~J. Cand{\`e}s, and Z.~Shen, ``A singular value thresholding
  algorithm for matrix completion,'' \emph{SIAM Journal on Optimization},
  vol.~20, no.~4, pp. 1956--1982, 2010.

\bibitem{bauschke2011convex}
H.~H. Bauschke and P.~L. Combettes, \emph{Convex analysis and monotone operator
  theory in Hilbert spaces}.\hskip 1em plus 0.5em minus 0.4em\relax Springer,
  2011.

\bibitem{zhou2005nonsubsampled}
J.~Zhou, A.~L. Cunha, and M.~N. Do, ``Nonsubsampled contourlet transform:
  construction and application in enhancement,'' in \emph{IEEE International
  Conference on Image Processing 2005}, vol.~1.\hskip 1em plus 0.5em minus
  0.4em\relax IEEE, 2005, pp. I--469.

\bibitem{schapire1998boosting}
R.~E. Schapire, Y.~Freund, P.~Bartlett, W.~S. Lee \emph{et~al.}, ``Boosting the
  margin: A new explanation for the effectiveness of voting methods,''
  \emph{The annals of statistics}, vol.~26, no.~5, pp. 1651--1686, 1998.

\bibitem{he2016deep}
K.~He, X.~Zhang, S.~Ren, and J.~Sun, ``Deep residual learning for image
  recognition,'' in \emph{Proceedings of the IEEE Conference on Computer Vision
  and Pattern Recognition}, 2016, pp. 770--778.

\bibitem{vetterli2002sampling}
M.~Vetterli, P.~Marziliano, and T.~Blu, ``Sampling signals with finite rate of
  innovation,'' \emph{IEEE transactions on Signal Processing}, vol.~50, no.~6,
  pp. 1417--1428, 2002.

\bibitem{vedaldi2015matconvnet}
A.~Vedaldi and K.~Lenc, ``{MatConvNet}: Convolutional neural networks for
  {M}atlab,'' in \emph{Proceedings of the 23rd ACM international conference on
  Multimedia}.\hskip 1em plus 0.5em minus 0.4em\relax ACM, 2015, pp. 689--692.

\bibitem{flohr2005image}
T.~Flohr, K.~Stierstorfer, S.~Ulzheimer, H.~Bruder, A.~Primak, and C.~H.
  McCollough, ``Image reconstruction and image quality evaluation for a
  64-slice {CT} scanner with z-flying focal spot,'' \emph{Medical physics},
  vol.~32, no.~8, pp. 2536--2547, 2005.

\bibitem{noo1999single}
F.~Noo, M.~Defrise, and R.~Clackdoyle, ``Single-slice rebinning method for
  helical cone-beam {CT},'' \emph{Physics in medicine and biology}, vol.~44,
  no.~2, p. 561, 1999.

\bibitem{dabov2007image}
K.~Dabov, A.~Foi, V.~Katkovnik, and K.~Egiazarian, ``Image denoising by sparse
  3-d transform-domain collaborative filtering,'' \emph{IEEE Transactions on
  image processing}, vol.~16, no.~8, pp. 2080--2095, 2007.

\bibitem{jin2016random}
K.~H. Jin and J.~C. Ye, ``Random impulse noise removal using sparse and low
  rank decomposition of annihilating filter-based hankel matrix,'' in
  \emph{Image Processing (ICIP), 2016 IEEE International Conference on}.\hskip
  1em plus 0.5em minus 0.4em\relax IEEE, 2016, pp. 3877--3881.

\bibitem{chambolle2004algorithm}
A.~Chambolle, ``An algorithm for total variation minimization and
  applications,'' \emph{Journal of Mathematical imaging and vision}, vol.~20,
  no.~1, pp. 89--97, 2004.

\bibitem{sokolic2017robust}
J.~Sokolic, R.~Giryes, G.~Sapiro, and M.~R. Rodrigues, ``Robust large margin
  deep neural networks,'' \emph{IEEE Transactions on Signal Processing}, 2017.

\end{thebibliography}

% Generated by IEEEtran.bst, version: 1.12 (2007/01/11)

% <OR> manually copy in the resultant .bbl file
% set second argument of \begin to the number of references
% (used to reserve space for the reference number labels box)
%\begin{thebibliography}{1}
%\bibitem{IEEEhowto:kopka}
%H.~Kopka and P.~W. Daly, \emph{A Guide to \LaTeX}, 3rd~ed.\hskip 1em plus
%  0.5em minus 0.4em\relax Harlow, England: Addison-Wesley, 1999.
%\end{thebibliography}

\end{document}